







\documentclass[twocolumn]{autart}    

\usepackage{times}
\usepackage{xcolor}
\usepackage[pdftex]{graphicx} 
\usepackage{algorithm}
\usepackage{algpseudocode}

\usepackage{enumitem}
\usepackage{MnSymbol}%
\usepackage{wasysym}
\usepackage[numbers]{natbib}
\usepackage{url}
\usepackage[normalem]{ulem}
\usepackage{wrapfig}

\newcommand{\robots}{\mathcal{R}}
\newcommand{\Cells}{\mathcal{V}}
\newcommand{\Edges}{\mathcal{E}}
\newcommand{\Bottles}{\mathcal{B}}
\newcommand{\Session}{\mathcal{S}}
\newcommand{\cell}{v}
\newcommand{\curr}[1]{curr(r_{#1})}
\newcommand{\bottle}[3]{{b}_{#2, #3}^{#1}}

\newcommand{\traj}[2]{\pi_{#1}^{#2}}
\newcommand{\graphname}{Path-Graph}
\newcommand{\Statexx}{\Statex \hskip1em}
\newcommand{\Statexxx}{\Statex \hskip2em}

\newtheorem{example}{Example}
\newtheorem{remark}{Remark}
\newtheorem{Problem}{Problem}
\newtheorem{theorem}{Theorem}

\newtheorem{definition}{Definition}
\newenvironment{proof}{\paragraph*{Proof:}}{\hfill$\square$}

\algdef{SE}[SUBALG]{Indent}{EndIndent}{}{\algorithmicend\ }%
\algtext*{Indent}
\algtext*{EndIndent}

\newcommand{\remove}[1]{}

\begin{document}

\begin{frontmatter}

\title{From Drinking Philosophers to Asynchronous Path-Following Robots}


\author{Yunus Emre Sahin}\ead{ysahin@umich.edu},    
\author{Necmiye Ozay}\ead{necmiye@umich.edu}               

\address{Department of Electrical Engineering and Computer Science, University of Michigan, Ann Arbor, MI}  

\begin{keyword}                           
Multi-agent systems;  Autonomous mobile robots; Concurrent systems; Deadlock avoidance. 
\end{keyword}                             

\begin{abstract}

In this paper, we consider the multi-robot path execution problem where a group of robots move on predefined paths from their initial to target positions while avoiding collisions and deadlocks in the face of asynchrony. We first show that this problem can be reformulated as a distributed resource allocation problem and, in particular, as an instance of the well-known Drinking Philosophers Problem (DrPP). By careful construction of the drinking sessions capturing shared resources, we show that any existing solutions to DrPP can be used to design robot control policies that are collectively collision and deadlock-free. We then propose modifications to an existing DrPP algorithm to allow more concurrent behavior, and provide conditions under which our method is deadlock-free. Our method does not require robots to know or to estimate the speed profiles of other robots and results in distributed control policies. We demonstrate the efficacy of our method on simulation examples, which show competitive performance against the state-of-the-art.
\end{abstract}

\end{frontmatter}

\section{Introduction}\label{chp:intro}
Multi-robot path planning (MRPP) has been one of the fundamental problems studied by artificial intelligence and robotics communities. Quickly finding paths that take each robot from their initial location to target location, and ensuring that robots execute these paths in a safe manner have applications in many areas from evacuation planning \cite{luh2012modeling} to warehouse robotics \cite{wurman2008coordinating}, and from formation control \cite{tanner2004leader} to coverage \cite{mcnew2007solving}.

There are several challenges in multi-robot path planning such as scalability, optimality, trading off centralized versus distributed decisions and corresponding communication loads, and potential asynchrony. Planning optimal collision-free paths is known to be hard \cite{surynek2010optimization} even in synchronous centralized settings. Recently developed heuristics aim to address the scalability challenge when optimality is a concern \cite{yu2016optimal}. Arguably, the problem gets even harder when there is non-determinism in the robot motions. One source of non-determinism is asynchrony, that is, the robots can move on their individual paths with different and time-varying speeds and their speed profiles are not known a priori. This might happen due to many factors such as low battery levels, calibration errors, or waiting to give way to humans in the workspace. The goal of this paper is, given a collection of paths, one for each robot, to devise a distributed protocol so that the robots are guaranteed to reach their targets in the face of asynchrony and avoid all collisions along the way. We call this the \emph{Multi-Robot Path Execution (MRPE)}  problem. We study this problem at a discrete-level where the execution policies we generate can be used with any suitably abstracted continuous-dynamics and low-level control policies that allow stopping the robot when needed. Moreover, our execution policies can interface with higher-level decision-making modules that lead to asynchrony by temporarily stopping the individual robots for emergencies, maintenance, or other reasons. {In addition to MRPE, our execution policies can also be relevant for other applications such as transportation networks where vehicles travel on fixed tracks or manufacturing processes where workpieces follow complex conveyor networks.}

The key insight of the paper is to recast the MRPE problem as a drinking philosophers problem (DrPP) \cite{chandy-misra}, an extension of the well-known dining philosophers problem \cite{dining}. DrPP is a resource allocation problem for distributed and concurrent systems. By partitioning the workspace into a set of discrete cells and treating each cell as a shared resource, we show how to construct drinking sessions such that the MRPE problem can be solved using any DrPP algorithm. Existing DrPP algorithms, such as \cite{chandy-misra, ginat1989efficient, welch1993modular}, can be implemented in distributed manner, and enjoy nice properties such as fairness and deadlock-freeness, while also guaranteeing collision avoidance when applied to multi-robot planning. To allow more concurrent behavior and to improve the overall performance, we further modify \cite{ginat1989efficient}, and derive conditions on the collection of paths such that collisions and deadlocks are guaranteed to be avoided.

The rest of the paper is organized as follows. Section~\ref{sec:related} briefly presents related work. Section~\ref{chp:prelim} formally defines the MRPE problem we are interested in solving. A brief summary of the DrPP and an existing solution is presented in Section~\ref{chp:drinking}. Section~\ref{chp:map} recasts the MRPE problem as a DrPP, and shows that existing methods can be used to solve MRPE problems. Furthermore, this section provides modifications to \cite{ginat1989efficient} that allow more concurrent behavior. Section~\ref{chp:results} shows that, when fed by the same paths, our algorithm achieves competitive results with the state-of-the-art \cite{ma2017multi}. Section~\ref{chp:conclusion} concludes the paper.

\section{Related Work}\label{sec:related}
{Multi-robot path planning deals with the problem of planning a collection of paths that take a set of robots from their initial position to a goal location without collisions. In this paper, we consider the type of problems where the workspace is partitioned into a set of discrete cells, each of which can hold at most one robot, and time is discretized. Most of the research in this field have been focused on finding optimal or suboptimal paths that minimize either the makespan (last arrival time) or the flowtime (sum of all arrival times) \cite{felner2017search,stern2019multi, yu2016optimal}. These methods require the duration of each actions to be fixed to show optimality. In real-life, however, robots cannot execute their paths perfectly. They might move slower or faster than intended due to various factors, such as low battery levels, calibration errors and other failures. Methods that deal with such uncertainties, which might lead to collisions or deadlocks if not handled properly, can be divided into two main groups. 
	
In the first group, robots are allowed to replan their paths at run-time \cite{panagou2014motion, csenbacslar2019robust, van2011reciprocal}. 
In this case, simpler path planning algorithms can be used, leaving the burden of collision avoidance to the run-time controllers. However, this approach might lead to deadlocks in densely crowded environments. Moreover, when the specifications are complex, changing paths might even lead to violations of the specifications. Therefore, replanning paths on run-time is not always feasible. Alternatively, collisions and deadlocks can be avoided without needing to replan on run-time \cite{desai2017drona, ma2017multi, reveliotis2008conflict, roszkowska2013distributed, sahin2018multirobot, zhou2018distributed,zhou2020distributed}. For instance, if the synchronization errors can be bounded, \cite{desai2017drona} and  \cite{sahin2018multirobot} show how to synthesize paths that are collision and deadlock-free. This is achieved by overestimating the positions of robots and treating them as moving obstacles. However, this is a conservative approach as the burden of collision and deadlock avoidance is moved to the offline planning part.

Alternatively, an execution policy can be used to decide when to move, slow down or stop robots on their predefined paths to avoid collisions and deadlocks. Given a collection of paths, which are collision and deadlock free under perfect synchronization, \cite{honig2016multi} provides a method for robust execution under synchronization errors. This work focuses on how to compute the temporal dependencies between robots and enforce a fixed ordering between each robot pair for all possible conflicts such that collisions and deadlocks are avoided. Same idea is used in \cite{ma2017multi}, named Minimal Communication Policies (MCP), but the focus is on the planning phase. A delay probability for each agent is used in the planning phase to optimize the expected makespan. This method is later expanded to lifelong missions in warehouse environments \cite{honig2019persistent}. Such a fixed ordering prevents collisions and deadlocks, however, it is limiting as the performance of the multirobot system depends highly on the exact ordering. If one of the robots experiences a failure at run-time and starts moving slowly, it might become the bottleneck of the whole system. In fact, we demonstrate the effects of such a scenario on the system performance and provide numerical results that show the robustness of our method.

When the collection of paths are known a priori, one can also find all possible collision and deadlock configurations, and prevent the system from reaching those. For instance, distributed methods in \cite{zhou2018distributed} and \cite{zhou2020distributed} find deadlock configurations by abstracting robot paths into an edge-colored directed graph. However, this abstraction step might be conservative. Imagine a long passage which is not wide enough to fit more than one robot, and two robots crossing this passage in the same direction. The entire passage would be abstracted as a single node, and even though robots can enter the passage at the same time and follow each other safely, they would not be allowed to do so. Instead, robots have to wait for the other to clear the entire passage before entering. Moreover, \cite{zhou2020distributed} require that no two nodes in the graph are connected by two or more different colored edges. This strong restriction limits the method's applicability to classical multirobot path execution problems where robots move on a graph and the same two nodes might be connected with multiple edges in each direction.

As connectivity and autonomous capabilities of vehicles improve, cooperative intersection management problems draw significant attention from researchers \cite{ahn2017safety, carlino2013auction, dresner2008multiagent, zohdy2016intersection}. These problems are similar to MRPE problem as both require coordinating multiple vehicles to prevent collisions and deadlocks. Compared to traditional traffic light-based methods, cooperative intersection management methods offer improved safety, increased traffic flow and lower emissions. We refer the reader to \cite{chen2015cooperative} for a recent survey on this topic and main solution approaches. Although they seem similar, the setting of intersection management problems are tailored specifically for the existing road networks, and thus, cannot be easily generalized to MRPE problems where robots/vehicles might be moving in non-structured environments.

Our method is based on reformulating the MRPE problem as a resource allocation problem. There are similar methods such as \cite{reveliotis2008conflict}, which requires a centralized controller, and \cite{roszkowska2013distributed}, which needs cells to be large enough to allow collision-free travel of up to two vehicles, instead of only one. We base our method on the well-known drinking philosopher algorithm. We show that any existing DrPP solution can be used to solve the MRPE problem if drinking sessions are constructed carefully. However, such methods require strong conditions on a collection of paths to hold, and limit the amount of concurrency in the system. To relax the conditions and to improve the performance, we provide a novel approach by taking the special structure of MRPE problems into account. We show that our method is less conservative than the naive approach, and provide numerical results to confirm the theoretical findings. Our approach leads to control policies that can be deployed in a distributed form.

\section{Problem Definition}\label{chp:prelim}
We start by providing definitions that are used in the rest of the paper and formally state the problem we are interested in solving. Let a set $\robots = \{r_1, \dots, r_{N}\}$ of robots share a workspace that is partitioned into set $\Cells$ of discrete cells. Two robots are said to be in \emph{collision} if they occupy the same cell at the same time. An ordered sequence $\pi = \{\pi^0, \pi^1, \dots\}$ of cells, where each $\pi^t\in \Cells$, is called a \emph{path}. The path segment $ \{\pi^t, \pi^{t+1}, \dots, \pi^{t'}\}$ is denoted by $\pi^{t:t'}$, and we write $\cell \in \pi$ if there exists a $t$ such that $\pi^t = v$. We assume that a path is given for each robot, and $\traj{n}{}$ denotes the path associated with $r_n$. We allow paths to contain loops but require them to be finite. We use $\traj{n}{end}$ and $\curr{n}$ to denote the final cell of $\pi_n$ and the number of successful transitions completed by $r_n$, respectively. We also define $next(r_n) \doteq \curr{n}+1$. The motion of each robot is governed by a \emph{control policy}, which issues one of the two commands at every time step: $(1)$ {$STOP$} and $(2)$ \emph{$GO$}. The $STOP$ action forces a robot to stay in its current cell. If the $GO$ action is chosen, the robot starts moving. This robot might or might not reach to the next cell within one time step, however, we assume that a robot eventually progresses if $GO$ action is chosen constantly. This non-determinism models the uncertainties in the environment, such as battery levels or noisy sensors/actuators, which might lead to robots moving faster or slower than intended. It can also be due to a higher-level decision maker that forces the robot to stay put for instance as an emergency stop to give way to humans in the workspace. We now formally define the  problem we are interested in solving:

\begin{Problem}\label{prob1}
	Given a collection $\Pi = \{\traj{1}{}, \dots \traj{N}{}\}$ of paths, design a control policy for each robot such that all robots eventually reach their final cells while avoiding collisions.
\end{Problem}

As stated in the problem definition, we assume that predefined paths are known by every robot prior to the start of execution. There are many control policies that can solve Problem~\ref{prob1}. For the sake of performance, policies that allow more concurrent behavior are preferred. In the literature, two metrics are commonly used to measure the performance: \emph{makespan (latest arrival time)} and \emph{flowtime (sum of arrival times)}.
 Given a set of robots $\robots = \{r_1, \dots, r_{N}\}$, if robot $r_n$ takes $t_n$ time steps to reach its final state, makespan and flowtime values are given by $\max_{1\leq n\leq N} t_n$ and $\sum_{n=1}^N t_n$, respectively. These values decrease as the amount of concurrency increases. However, it might not be possible to minimize both makespan and flowtime at the same time, and choice of policy might depend on the application. 

To solve Problem~\ref{prob1}, we propose a method that is based on the well-known drinking philosophers problem introduced by \cite{chandy-misra}. For the sake of completeness, this problem is explained briefly in Section \ref{chp:drinking}.

\section{Drinking philosophers problem}\label{chp:drinking}
The drinking philosophers problem is a generalization of the well-known \emph{dining philosophers problem} proposed by \cite{dining}. These problems capture the essence of conflict resolution, where multiple resources must be allocated to multiple processes. Given a set of processes and a set of resources, it is assumed that each resource can be used by at most one process at any given time. In our setting, processes and resources correspond to robots and discrete cells that partition the workspace, respectively. Similar to mutually exclusive use of the resources, any given cell can be occupied by at most one robot to avoid collisions. In the DrPP setting, processes are called \emph{philosophers}, and shared resources are called \emph{bottles}. A philosopher can be in one of the three \emph{states}: $(1)$ \emph{tranquil}, $(2)$ \emph{thirsty}, or $(3)$ \emph{drinking}. A \emph{tranquil} philosopher may stay in this state for an arbitrary period of time or \emph{become thirsty} at any time it wishes. A thirsty philosopher \emph{needs} a non-empty subset of bottles to drink from. This subset, called \emph{drinking session}, is not necessarily fixed, and it could change over time. After acquiring all the bottles in its current drinking session, a thirsty philosopher \emph{starts drinking}. After a finite time, when it no longer needs any bottles, the philosopher goes back to tranquil state. The goal of the designer is to find a set of rules for each philosopher for acquiring and releasing bottles. A desired solution would have the following properties. 
(i) \emph{Liveness:} A thirsty philosopher eventually starts drinking. In our setting liveness implies that each robot is eventually allowed to move. 
(ii) \emph{Fairness:} No philosopher is consistently favored over another. In multi-robot setting, fairness indicate that there is no fixed priority order between robots.
(iii) \emph{Concurrency:} Any pair of philosophers must be allowed to drink at the same time, as long as they wish to drink from different bottles. Analogously, no robot waits unnecessarily.

We base our method on the DrPP solution of \cite{ginat1989efficient}, which is shown in Algorithm~\ref{alg:base}. This solution ensures liveness, fairness and concurrency. For the sake of completeness, we provide a brief summary of their solution, but refer the reader to \cite{ginat1989efficient} for the proof of correctness and additional details.

Each philosopher $p$ has a unique integer $id_p$ and keeps track of two non-decreasing integers: session number $s\_num_p$ and the highest received session number $max\_rec_p$. These integers are used to keep a strict priority order between the philosophers. Conflicts are resolved according to this order, in favor of the philosopher with the higher priority. We say that \emph{philosopher $p$ has higher priority than philosopher} $r$ (denoted  \emph{$p\prec r$}) if $(s\_num_p, id_p) \prec (s\_num_r, id_r)$ that is $s\_num_p < s\_num_r$, or $s\_num_p = s\_num_r$ and $id_p<id_r$. That is, smaller session number indicates a higher priority, and in the case of identical session numbers, philosopher with the smaller $id$ has the higher priority. Note that the priority order is not fixed as $s\_num$ values change over time.

Philosophers also keep track of several Boolean variables. Let $p$ and $r$ be two philosophers and $b$ be a bottle shared between them. It must be noted that, philosophers $p$ and $r$ can share multiple bottles among each other, but each bottle $b$ is shared by exactly two philosophers. For each bottle $b$ in philosopher $p$'s inventory, we define two booleans $hold_p(b)$ and $req_p(b)$, and say that $p$ holds the bottle $b$ (or the request token for the bottle $b$) if the variable $hold_p(b)$ (or $req_p(b)$) holds $true$. 
Similarly, we say that $p$ needs $b$ if $need_p(b)$ hold $true$. Algorithm~\ref{alg:base} ensures mutual exclusiveness, that is, $hold_p(b)$ and $hold_r(b)$ (similarly $req_p(b)$ and $req_r(b)$) cannot be $true$ at the same time.

Philosophers are initialized such that each philosopher is in {tranquil} state and $s\_num$ and $max\_rec$ are set to $0$. Bottles and associated request tokens are shared between philosophers such that one philosopher holds the bottle while the other holds the associated request token. Since philosophers are in tranquil state, all $need(b)$ variables are  initially $false$.

The rules $R1-R6$ of Algorithm~\ref{alg:base} are triggered by events. For example, if a philosopher $p$ wants to drink from a set of bottles $\Session$, it needs to \emph{become thirsty} first by executing $R1$. Upon holding all the bottles in $\Session$, $R2$ is triggered and $p$ starts drinking.
Similarly, $p$ triggers $R4$ when 
$p$ (i) needs $b$, (ii) does not currently hold $b$, and (iii) holds the associated request token $req_p(b)$. Then, $p$ requests the bottle from $r$ by sending the message $(req_b, s\_num_p, id_p)$. Receiving such a message, triggers $R5$ in $r$. If $r$ (1) does not need $b$ or (2) is thirsty and $(s\_num_p, id_p) \prec (s\_num_r, id_r)$, then it sets $hold_r(b)$ $false$ and sends the bottle to $p$. Sending a bottle is simply done by sending a message to $p$. Upon receiving such a message, $p$ sets $hold_p(b)$ $true$ by running $R6$. We refer the reader to \cite{ginat1989efficient} for more details.

 \begin{algorithm}[t]
	\caption{Drinking Philosopher Algorithm by \cite{ginat1989efficient}}\label{alg:base}
	\begin{algorithmic}[1]
		\Statex \hskip-1em {\bf R1: \emph{becoming\_thirsty} with session $\mathcal{S}$}
			\Statex {\bf for} each bottle $b\in \mathcal{S}$ {\bf do} 
			\Statexx $need_p(b) \gets true$
			\Statex {\bf end for}
			\Statex $s\_num_p \gets max\_rec_p + 1$
			\Statex
		\hskip-1.5em \hrulefill
		
		\Statex \hskip-1em {\bf R2: start \emph{drinking}}
		\Statex {\bf if} holding all needed bottles {\bf then}
			\Statexx become $drinking$
		\Statex {\bf end if}
					\Statex
					
		\hskip-1.5em \hrulefill
		\Statex\hskip-1em  {\bf R3: \emph{becoming\_tranquil}, honoring deferred requests}
		\Statex {\bf for} each consumed bottle $b$ {\bf do}
		\Statexx $need_p(b)\gets false;$
		\Statexx {\bf if} $req_p(b)$ {\bf then}
		\Statexxx $[hold_p(b) \gets false; Send(b)]$
		\Statexx {\bf end if}
		\Statex {\bf end for}
					\Statex
					
		\hskip-1.5em \hrulefill
		\Statex\hskip-1em  {\bf R4: \emph{requesting a bottle}}
		\Statex {\bf if} ($need_p(b)$ {\bf and} $\neg hold_p(b)$ {\bf and} $req_p(b))$) {\bf then} 
		\Statexx $req_p(b)\gets false$
		\Statexx $Send(req_b, s\_num_p, id_p)$
		\Statex {\bf end if}
			\Statex
			
		\hskip-1.5em \hrulefill
				\Statex\hskip-1em  {\bf R5: \emph{receiving a request} from $r$, resolving a conflict}
		\Statex upon reception of $(req_b, s\_num_r, id_r)$ do
		\Statex $req_p(b) \gets true;$
		\Statex $max\_rec_p \gets \max(max\_rec_p, s\_num_r)$
		\Statex {\bf if} $\Big(\big(\neg need_p(b)\big)$ {\bf or}
 $\big(p$ is $thirsty$ {\bf and} $(s\_num_r,id_r) < (s\_num_p, id_p) 
		\big)\Big)$ {\bf then}
		\Statexxx $[ hold_p(b)\gets false; Send(b)]$
		\Statex {\bf end if}
			\Statex
			
		\hskip-1.5em \hrulefill
		\Statex\hskip-1em  {\bf R6: \emph{receive bottle}}
		\Indent
		\Statexx upon reception of $b$ {\bf do}
		\Statexx $\quad$ $hold_p(b) \gets true$
		\EndIndent
	\end{algorithmic}
\end{algorithm}


\section{Multi-robot navigation as a drinking philosophers problem}\label{chp:map}
In this section we recast the multi-robot path execution problem as an instance of drinking philosophers problem. We first show that naive reformulation using existing DrPP solutions leads to conservative control policies. We then provide a solution that is based on Algorithm~\ref{alg:base}.

We first provide an intuitive explanation for the transformation from MRPE to DrPP, and then present the formal procedure.
Given a set $\Cells = \{\cell_1, \dots , \cell_{|\Cells|}\}$ of cells and a collection $\Pi = \{\traj{1}{}, \dots \traj{N}{}\}$ of paths, cells that appear in more than one path are called \emph{shared} and the rest are called \emph{free}. We denote the set of all shared cells by $\Cells_{shared}$. We assume that robots know all paths prior to the start of the execution, hence each robot knows which cells are shared and which ones are free. A shared cell must be occupied at most by one robot at any given time to avoid collisions. One can treat the robots as philosophers and shared cells as bottles to enforce this mutual exclusion requirement. In multi-robot setting, the actions \emph{``moving between two free cells"} or \emph{``occupying a free cell"} of a robot are mapped into the \emph{tranquil} state. Similarly, the \emph{``desire to move into a {shared cell}"} and \emph{``moving towards or occupying a shared cell} are mapped into the \emph{thirsty} and the \emph{drinking} states, respectively.

Given any two arbitrary robots, we define a bottle for each cell that is visited by both. For example, if the $k^{th}$ cell $v_k\in \Cells$ is visited both by $r_m$ and $r_n$, we define the bottle $\bottle{k}{m}{n}$. We denote the set of cells visited by both $r_m$ and $r_n$ by $\Cells_{m,n} \doteq \{v\mid  \exists\; t_m, t_n: \traj{m}{t_m}  = \traj{n}{t_n} = v \in \Cells_{shared}\}$. It must be noted that for a shared cell $v_k\in \Cells_{m,n}$, there exists a single bottle shared between $r_n$ and $r_m$, and both $\bottle{k}{m}{n}$ and $\bottle{k}{n}{m}$ refer to the same object. Multiple bottles would be defined for a shared cell that is visited by more than two robots, where each bottle is shared between exactly two robots. We use $\Bottles_{m,n}$ and $\Bottles_{m}$  to denote the set of all bottles $r_m$ shares with $r_n$ and with all other robots, respectively. We then define $\Bottles_m(V)$, the set of bottles associated with the cells in $ V\subseteq\Cells$ that $r_m$ share with others such that $\Bottles_m(V) \doteq \{\bottle{k}{m}{n} \in \Bottles_{m} \mid v_k \in V\}$. We use the following example to illustrate the concepts above. 

\begin{figure}
	\centering
	\includegraphics[width=0.8\linewidth]{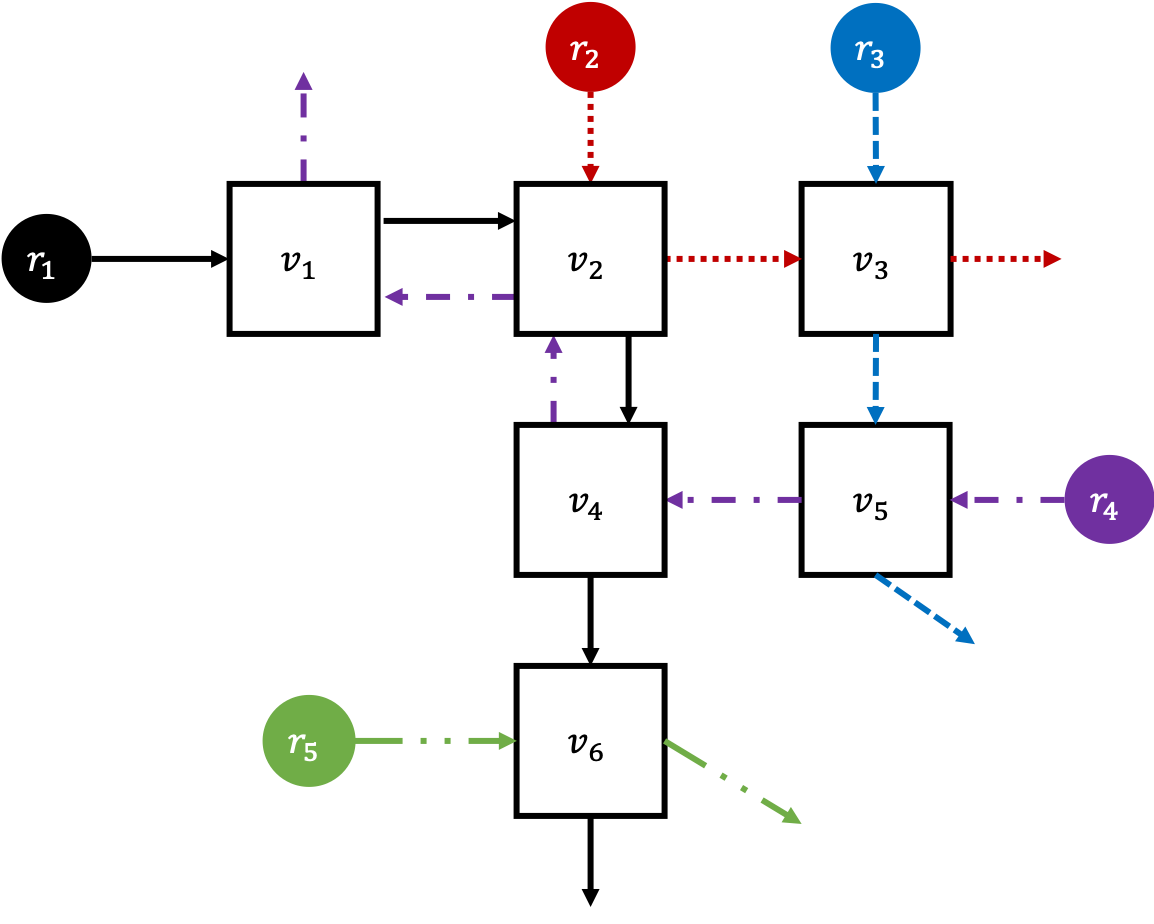}
	\caption{An illustrative example showing partial paths of five robots. Robots, each assigned a unique color/pattern pair, are initialized on free cells. Shared cells are shown as hollow black rectangles. Each path eventually reaches a free cell that is not shown for the sake of simplicity.}
	\label{fig:rainbow}
\end{figure}

\begin{example}\label{ex:1}
	In the scenario depicted in Figure~\ref{fig:rainbow}, the robot $r_1$ shares one bottle with $r_2$, $\Bottles_{1,2} = \{b^{2}_{1,2}\}$, three bottles with $r_4$,  $\Bottles_{1,4} = \{b^{1}_{1,4}, b^{2}_{1,4}, b^{4}_{1,4}\}$, and one bottle with $r_5$, $\Bottles_{1,5} = \{b^{6}_{1,5}\}$. The set $\Bottles_1$ is the union of these three sets, as $r_1$ does not share any bottles with $r_3$. Given $V= \{v_2\}$, we have $\Bottles_{1}(V) = \{b^{2}_{1,2}, b^{2}_{1,4}\}$.
\end{example}

Bottles are used to indicate the priority order between robots over shared cells. 
For instance, if $b^{k}_{m,n}$ is currently held by robot $r_m$, then \emph{$r_m$ has a higher priority than $r_n$ over the shared cell $v_k$}. Note that, this order is dynamic as bottles are sent back and forth. 

Collisions can be prevented simply by the following rule: \emph{``to occupy a shared cell $v_k$, the robot $r_n$ must be drinking from all the bottles in $B_n(v_k)$".} That is, $r_n$ must set all the bottles in $B_n(v_k)$ needed, and be in drinking state. Upon arriving a free cell, a drinking robot would become tranquil.  This rule prevents collisions as $r_n$ is the only robot allowed to occupy $v_k$ while it is drinking from $B_n(v_k)$. However, this rule is not sufficient to ensure that all robots reach their final cells. Without the introduction of further rules, robots might end up in a \emph{deadlock}. We formally define deadlocks as follows:

\begin{definition}\label{dfn:deadlock}
A \textbf{deadlock} is any configuration where a subset of robots, which have not reached their final cell, wait cyclically and choose $STOP$ action indefinitely.
\end{definition} 

To exemplify the insufficiency of the aforementioned rule, imagine the scenario shown in Figure~\ref{fig:rainbow}.  Robots $r_1$ and $r_4$ traverse the neighboring cells $v_1$ and $v_2$ in the opposite order. Assume $r_4$ is at $v_4$ and wants to proceed into $v_2$, and, at the same time, $r_1$ wants to move into $v_1$. Using the aforementioned rule, robots must be drinking from the associated bottles in order to move. Since they wish to drink from different bottles, both robots would be allowed to start drinking. 
After arriving at $v_1$, $r_1$ has to start drinking from $B_1(v_2)$ in order to progress any further. However, $r_4$ is currently drinking from $b_{1,4}^{2} \in B_1(v_2)$ and cannot stop drinking before leaving $v_2$. Similarly, $r_4$ cannot progress, as $r_1$ cannot release $b_{1,4}^{1}$ before leaving $v_1$. Consequently, robots would not be able to make any further progress, and would stay in drinking state forever.

\subsection{Naive Formulation}\label{ssec:naive}

We now show that deadlocks can be avoided by constructing the drinking sessions carefully. For the correctness of DrPP solutions, all drinking sessions must end in finite time. If drinking sessions are set such that \emph{a robot entering a shared cell is clear to move until it reaches a free cell without requiring additional bottles along the way}, then all drinking sessions would end in finite time. That is, if a robot is about to enter a segment which consists of consecutive shared cells, it is required to acquire not only the bottles associated with the first cell, but all the bottles on that segment. To formally state this requirement, let $\Session_n(t)$ denote the drinking session associated with cell $\pi_n^t$ for robot $r_n$. That is, $r_n$ should be drinking from all the bottles in $\Bottles_n(\Session_n(t))$ to occupy $\pi_n^t$. Now set

\begin{equation}\label{eq:naive}
\Session_n(t) = \pi_n^{t:t'} = \{\pi_n^t, \dots \pi_n^{t'}\}
\end{equation}

where $\pi_n^{k}\in \Cells_{shared}$ for all $k\in [t,t']$ and $\pi_n^{t'+1}\in \Cells_{free}$ is the first free cell after $\pi_n^{t}$. As long as it is drinking from $\Session_n(t)$, $r_n$ has the highest priority among all robots over the cells $\pi_n^{t:t'}$. Then, $r_n$ can constantly choose the action $GO$ until eventually reaching the free cell $\pi_n^{t'+1}$ and would stop drinking in finite time. Therefore, any existing DrPP solution, such as \cite{chandy-misra, ginat1989efficient, welch1993modular}, can be used to design the control policies that solve Problem~\ref{prob1} if the drinking sessions are constructed as in \eqref{eq:naive}. 

To illustrate, let us revisit the scenario in Figure~\ref{fig:rainbow}. To be able to occupy $v_1$, $r_1$ needs to acquire all bottles in $\Bottles_{1}(v_1, v_2, v_4, v_6)$. Then, $r_1$ is free to move all the way up to the last shared state $v_6$ without requiring additional bottles. Note however that, once $r_1$ reaches $v_2$, bottles $\Bottles_{1}(v_1)$ are no longer needed to avoid deadlocks. To allow more concurrency, we introduce a new rule that lets robots to \emph{drop} bottles they no longer need: 
\begin{itemize}[label = {}]
{\setlength\itemindent{-1em}	\item {\bf R7: upon leaving a shared state}}

		\item {\bf for} each $b \not\in\Bottles_{p}\Big( \Session_p\big(curr(p)\big)\Big)$ {\bf do} 
		{\setlength\itemindent{1em} \item  $need_p(b) \gets false$}
			{\setlength\itemindent{1em}\item {\bf if} $req_p(b)$ {\bf then}}
		 {\setlength\itemindent{2em} \item $[hold_p(b) \gets false; Send(b)]$}
		 	{\setlength\itemindent{1em}\item {\bf end if}}
		 \item {\bf end for}
\end{itemize}
Without {$R7$}, if $r_1$ is at $v_2$, 
$r_2$ would need to wait until $r_1$ leaves $v_6$. With {$R7$}, $r_1$ would release $b_{1,2}^2$ upon leaving $v_2$. Then, $r_2$ can move to $v_2$, while $r_1$ is at $v_4$.

However, the control policies resulting from the aforementioned approach are conservative and lead to poor performance in terms of both makespan and flowtime. To illustrate using the scenario shown in Figure~\ref{fig:rainbow}, if $r_1$ is currently at $v_1$, $r_5$ cannot move into $v_6$ since $b_{1,5}^{6}$ is held by $r_1$. This is a conservative action as $r_5$ cannot cause a deadlock by moving to $v_6$, as it moves to a free cell right after. 

We have seen that \emph{small} drinking sessions might lead to deadlocks, and \emph{large} drinking sessions might lead to unnecessary waits, and thus, bad performance. Our goal is to construct drinking sessions \emph{as small as possible} such that we can guarantee deadlock-freeness while allowing as much concurrent behavior as possible. To achieve this goal, we introduce a new drinking state and rules regarding its operation.

\subsection{New Drinking State and New Rules}\label{ssec:insat}

In this subsection, we propose 
a new drinking state for the philosophers, namely \emph{insatiable}. This new state is used when robot moves from a shared cell to another shared cell. We also add an additional rule $R8$ regarding this new state and modify the existing rules $R4$ and $R5$ of Algorithm~\ref{alg:base} as $R'4$ and $R'5$, respectively:

\begin{itemize}[label = {}]
	{\setlength\itemindent{-1em}	\item {\bf R'4: \emph{requesting a bottle}}}
\item {\bf if} $\big(need_p(b)$ {\bf and} $\neg hold_p(b)$ {\bf and} $req_p(b)\big)$ {\bf then}
		{\setlength\itemindent{1em}\item  $req_p(b)\gets false$}
	{\setlength\itemindent{1em}	\item $Send(req_b, s\_num_p, id_p, ds_p)$}
\item  {\bf end if}
\end{itemize}

\begin{itemize}[label = {}]
	{\setlength\itemindent{-1em}	\item {\bf {R'5}: \emph{receiving a request from $r$, and resolving a conflict} }}
	\item upon reception of $(req_b, s\_num_r, id_r, ds_r)$ do
	\item $req_p(b) \gets true;$
	\item $max\_rec_p \gets \max(max\_rec_p, s\_num_r)$
	\item  {\bf if} $\big((1)$ {\bf or} $(2)$ {\bf or} $(3)\big)$ 
		\item  {\bf then}
		{\setlength\itemindent{-1em} \item $\qquad$ $[hold_p(b)\gets false; Send(b)]$}
		\item  {\bf end if}
\end{itemize}
where 
\begin{enumerate}
	\item $\neg need_p(b)$
	\item $p$ is $thirsty$ {\bf and}
	\begin{enumerate}
		\item $\big(ds_r$ is $thirsty$ {\bf and} $(s\_num_r, id_r)\prec (s\_num_p, id_p)\big)$ {\bf or},
		\item $ds_r$ is $insatiable$,
	\end{enumerate} 
	\item  $p$ is $insatiable$ with $\big(\Bottles_p({\Session}_1), \Bottles_{p}({\Session}_2)\big)$  {\bf and},
	\begin{enumerate}
		\item $ds_r$ is $insatiable$ {\bf and},
		\item $b\not \in \Bottles_{p}(\mathcal{S}_1)$ {\bf and},
		\item $(s\_num_r, id_r)\prec (s\_num_p, id_p)$.
	\end{enumerate}
\end{enumerate}

\begin{itemize}[label = {}]
	{\setlength\itemindent{-1em}	\item {\bf R8: becoming insatiable with tuple $\big(\Bottles_p({\Session}_1), \Bottles_{p}({\Session}_2)\big)$}} 
		\item {\bf for}  each bottle $b\in\Bottles_{p}({\Session}_2)$ {\bf do}  
			{\setlength\itemindent{1em}\item $need_p(b) \gets true$}
		\item {\bf end for}  
		\item  {\bf if} not holding all bottles in $\Bottles_{p}(\Session_1\cup \Session_2)$ {\bf then}
			{\setlength\itemindent{1em}\item become {$insatiable$}}
		\item  {\bf end if}
\end{itemize}
The message structure used for requesting bottles is modified in $R'4$ and now includes the drinking state $ds_p \in \{tranquil$, $thirsty$, $drinking$, $insatiable\}$ of the sender. This information is used in conjunction with $id$ and $s\_num$ by the receiver to decide whether to grant or defer the request.

In the naive formulation, drinking sessions are set such that a robot entering a shared cell is free to move until it reaches a free cell, without requiring additional bottles along the way. The insatiable state is intended to soften this constraint. Assume robot $r_n$ wants to move to shared cell $\pi_n^t$, and the first free cell after $\pi_n^t$ is $\pi_n^{t'+1}$ for some arbitrary $t'>t$, all the cells in between are shared. If $r_n$ enters the first shared cell without acquiring all the bottles until $\pi_n^{t'+1}$, it would need to acquire those bottles at some point along the way. If $r_n$ becomes thirsty to acquire those bottles, it risks losing the bottles it currently holds. If another robot $r_m$ with a higher priority needs and receives the bottles associated with the cell $r_n$ currently occupies, two robots might collide.

Insatiable state allows a robot to request new bottles without risking to lose any of the bottles it currently holds.
In this state, the robot does not hold all the bottles it needs to start drinking, similar to thirsty state. The difference between two states is that, an insatiable philosopher always has a higher priority than a thirsty philosopher regardless of their session numbers. Moreover, an insatiable philosopher does not release under any circumstance any of the bottles needed to occupy the cell it is currently in. 

In other words, if $p$ is insatiable with $(\Bottles_{1}, \Bottles_{2})$, then $p$ already has all bottles in $\Bottles_{1}$ and cannot release them. Moreover, $p$ is trying to acquire the bottles in $\Bottles_{2}$ to start drinking, which might be released, if a robot with higher priority requests them. An insatiable robot always has a higher priority than a thirsty robot. In case of identical drinking states, $\prec$ relation is used to resolve the priority order.

The insatiable state and the rules regarding its operation might lead to deadlocks without careful construction of drinking sessions. We now explain how to construct drinking sessions to avoid deadlocks.

\subsection{Constructing Drinking Sessions}\label{ssec:sessions}

To compute drinking sessions, we first need to define a new concept called \emph{\graphname}:
\begin{definition}
	The \textbf{\graphname} induced by the collection $\Pi = \{\traj{1}{}, \dots \traj{N}{}\}$ of paths is a directed edge-colored multigraph $G_\Pi = (\Cells, \Edges_{\Pi}, \mathcal{C})$ where $\Cells$ is a set of nodes, one per each cell in $\Pi$, $\Edges_{\Pi} = \{( \traj{n}{t}, c_n,  \traj{n}{t+1}) \mid \pi_n\in \Pi\}$ is the set of edges, representing transitions of each path, and $\mathcal{C} = \{c_1,\dots, c_N\}$ is the set of colors, one per each path (i.e., one per each robot).	
\end{definition}

A \graphname~is a graphical representation of a collection of paths, overlayed on top of each other. The nodes of this graph correspond to discrete cells that partition the workspace, and edges illustrate the transitions between them. Color coding of edges indicate which robot is responsible for a particular transition. In other words, if $\pi_n$ has a transition from $u$ to $v$, then there exists a $c_n$ colored edge from $u$ to $v$ in $G_\Pi$, i.e.,  $(u,c_n,v)\in \Edges_{\Pi}$.

\graphname s are useful to detect possible deadlock configurations. Intuitively, deadlocks occur when a subset of robots wait cyclically for each other. We first show that such configurations correspond to a \emph{rainbow cycle} in the corresponding \graphname. A rainbow cycle is a closed walk where no color is repeated. Let $\Pi$ be a collection of paths and $G_{\Pi}$ be the \graphname~induced by it. Assume that a subset $\{r_1, \dots, r_K\}\subseteq \mathcal{R}$ of robots are in a deadlock configuration such that $r_n$ waits for $r_{n+1}$ for all $n \in \{1, \dots, K\}$ where $r_{K+1} = r_1$. That is, $r_n$ cannot move any further, because it wants to move to the cell that is currently occupied by $r_{n+1}$. Let $v_n$ denote the current cell of $r_n$. Since $r_n$ wants to move from $v_n$ to $v_{n+1}$, we have $e_n = (v_n, c_n, v_{n+1}) \in \Edges_{\Pi}$. Then, $\omega = \{(v_1, c_1, v_2),\dots ,(v_K, c_K, v_1)\}$ is a rainbow cycle of $G_\Pi$.  For instance, there are two rainbow cycles in Figure~\ref{fig:rainbow}: $\omega_1 = \{(v_1, c_1, v_2), (v_2, c_4, v_1)\}$ and $\omega_2 = \{(v_2, c_1, v_4), (v_4, c_4, v_2)\}$.

The first idea that follows from this observation is to limit the number of robots in each rainbow cycle to avoid deadlocks. However, this is not enough as rainbow cycles can intersect with each other and robots might end up waiting for each other to avoid eventual deadlocks. For instance, in the scenario illustrated in Figure~\ref{fig:rainbow}, let $r_1$ and $r_4$ be at $v_1$ and $v_4$, respectively. The number of robots in each rainbow cycles is limited to one, nonetheless, this configuration will eventually lead to a deadlock.

 \begin{algorithm}[t]
	\caption{find\_equivalence\_classes}\label{alg:cycles}
	\hspace*{\algorithmicindent} \textbf{Input} $G_{\Pi}$
	\textbf{ return} $\tilde{G}_{\Pi}$
	\begin{algorithmic}[1]
		\State $\sim\gets\emptyset$
		\For{$u \in G_{\Pi}$} 
		\State expand $\sim$ such that $(u, u)$ $\in$ $\sim$
		\EndFor
		\State {$\mathcal{W} \gets $ find\_rainbow\_cycles($G_{\Pi}$)}
		\If{$\mathcal{W} = \emptyset$}
		\State $\tilde{G}_{\Pi}\gets G_{\Pi}$
		\State\Return 
		\Else
		\For {$W\in \mathcal{W}$}
		\For{$u,v \in W$} 
		\State expand $\sim$ such that $(u, v)$ $\in$ $\sim$
		\EndFor
		\EndFor
		\State $\tilde{G}_{\Pi}\gets find\_quotient(G_{\Pi}, \sim)$ 
		\State $find\_equivalence\_classes(\tilde{G}_{\Pi})$
		\EndIf
	\end{algorithmic}
\end{algorithm}

We propose Algorithm~\ref{alg:cycles} to construct the drinking sessions, which are used to prevent such deadlocks. Given a collection $\Pi$ of paths let $G_{\Pi} = (\Cells, \Edges_{\Pi}, \mathcal{C})$ denote its \graphname. We first define equivalence relation $\sim$ on $\Cells$ such that each node is equivalent only to itself. We then find all rainbow cycles in $G_{\Pi}$. Let $\mathcal{W}$ denote the set of all rainbow cycles. For each rainbow cycle $W\in\mathcal{W}$, we expand the equivalence relation $\sim$ by declaring all nodes in $W$ to be equivalent. That is, if $u$ and $v$ are two nodes of the rainbow cycle $W$, we add the pair $ (u,v)$ to the equivalence relation $\sim$. Note that, due to transitivity of the equivalence relation, nodes of two intersecting rainbow cycles would belong to the same equivalence class. The relation $\sim$ partitions $\Cells$ by grouping the intersecting rainbow cycles together. We then find the quotient set $\Cells\big/{\sim}$ and define a new graph $\tilde{G}_{\Pi} = (\Cells\big/{\sim}, \tilde{\Edges}_{\Pi}, \mathcal{C})$ where $([u], c_m, [v])\in \tilde{\Edges}_{\Pi}$ if $[u]\not=[v]$, and there exists $\alpha \in [u]$, $\beta \in [v]$ such that $(\alpha,c_m, \beta)\in \Edges_{\Pi}$. That is, we create a node for each equivalence class. We then add a $c_m$ colored edge to $\tilde{G}_{\Pi}$ between the nodes corresponding $[u]$ and $[v]$ if there is a $c_m$ colored edge in $G_{\Pi}$ from a node in $[u]$ to a node in $[v]$. We repeat the same process with $\tilde{G}_{\Pi}$ in a recursive manner until no more rainbow cycles are found.

\begin{prop}
	Algorithm \ref{alg:cycles} terminates in finite steps.
\end{prop}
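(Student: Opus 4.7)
The plan is to exhibit a strictly decreasing, non-negative integer monovariant that governs the recursion, and to verify separately that each single invocation performs only finitely many primitive operations. Together these facts bound both the recursion depth and the work per call, yielding termination.

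First, I would verify that one invocation of the algorithm, prior to the recursive call, runs in finite time. The initialization loop is over the finite vertex set. The subroutine \texttt{find\_rainbow\_cycles} operates on the finite edge-colored multigraph $G_\Pi$: since rainbow cycles use pairwise distinct colors and the color set $\mathcal{C}$ has only $N$ elements, each rainbow cycle has length at most $N$, so the set $\mathcal{W}$ is finite and can be enumerated in finite time. The loops that expand $\sim$ iterate over the finite sets $\mathcal{W}$ and $W\in\mathcal{W}$, and constructing the quotient graph $\tilde{G}_\Pi = (\Cells/{\sim}, \tilde{\Edges}_\Pi, \mathcal{C})$ again requires only finite work on a finite input.

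Next, I would argue that the recursion depth is bounded by taking $|\Cells|$, the number of nodes in the current graph, as the monovariant. If $\mathcal{W} = \emptyset$, the algorithm returns immediately. Otherwise, pick any $W\in\mathcal{W}$. Because $W$ is a closed walk with pairwise distinct colors and with length at least two (a robot's path contains no self-loops in the intended motion model), $W$ contains two distinct vertices $u$ and $v$ that are placed in the same equivalence class by the expansion step. Hence $|\Cells/{\sim}| \leq |\Cells| - 1$, and the graph passed to the recursive call has strictly fewer nodes than the current one. Iterating, after at most $|\Cells|$ recursive calls the graph is reduced to a single node, which admits no rainbow cycle, forcing termination on the next call via the $\mathcal{W}=\emptyset$ branch.

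The main subtlety I expect is ruling out degenerate rainbow cycles of length one: if a self-loop qualified as a rainbow cycle, then the expansion step would only add pairs already in $\sim$, the monovariant would stall, and the recursion could fail to shrink the graph. This is avoided by the convention that consecutive cells along any path $\pi_n$ are distinct, so no self-loops appear in $G_\Pi$ and every rainbow cycle genuinely merges at least two previously distinct equivalence classes. Everything else is bookkeeping on finite sets.
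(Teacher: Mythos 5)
Your proof is correct and follows essentially the same route as the paper's: the node count $|\Cells|$ is the strictly decreasing monovariant, so the recursion bottoms out after at most $|\Cells|$ calls. You are somewhat more careful than the paper in two respects — verifying that each invocation does only finite work, and noting that the strict decrease relies on every rainbow cycle containing at least two distinct vertices (no self-loops) — but these are refinements of the same argument, not a different one.
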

\begin{proof}
Since all paths are finite, the number of nodes in the \graphname~$G_{\Pi}$, $|\Cells|$, is finite. At each iteration, Algorithm \ref{alg:cycles} either finds a new graph $\tilde{G}_{\Pi}$ which has a smaller number of nodes, or returns $G_{\Pi}$. Therefore, Algorithm \ref{alg:cycles} is guaranteed to terminate at most in $|\Cells|$ steps. 
\end{proof}

\begin{remark}
Algorithm \ref{alg:cycles} needs to find all rainbow cycles of an edge-colored multi-graph at each iteration, which can be done in the following way. Given $G = (\Cells, \Edges, \mathcal{C})$, obtain $E \subseteq \Cells\times \Cells$ from $\Edges$ by removing the coloring and replacing multiple edges between the same two nodes with a single edge. Then, find all simple cycles in the graph $G' = (\Cells, E)$. Finally, check if these cycles can be colored as a rainbow cycle.

Finding all simple cycles in a directed graph is time bounded by $\mathcal{O}((|V| + |E|)(C+1))$ and space bounded by $\mathcal{O}((|V| + |E|)$, where $C$ is the number of cycles \cite{johnson1975finding}. Although the number of cycles in a directed graph grows, in the worst-case, exponentially with the number of nodes, this operation can be done efficiently in practice \cite{gupta2021finding}. Deciding if a cycle can be rainbow colored can be posed as an exact set cover problem, which is NP-complete. This is essentially due to the fact that, in the worst-case, the number of cycles in a multi-graph can be exponential in the number of colors compared to the corresponding directed graph. However, the number of nodes decrease at each iteration of Algorithm \ref{alg:cycles}, making computations easier. Moreover, while the worst-case complexity is high, these operations can usually be performed efficiently in practice.
\end{remark}

When the Algorithm~\ref{alg:cycles} finds the fixed point, we set
\begin{equation}\label{eq:session}
\tilde{\Session}_n(t) \doteq \Session_{n}(t)\cap [\pi_n^{t} ]
\end{equation}
where $\Session_{n}(t)$ is defined as in \eqref{eq:naive} and $[\pi_n^{t}]$ is the equivalence class of $\pi_n^{t}$. That is, $r_n$ must be drinking from all the bottles in $\Bottles_n(\tilde{\Session}_n(t))$ to be able to occupy $\pi_n^t$. 

We now revisit the example in Figure~\ref{fig:rainbow}.

\begin{example}
	Let $G_{\Pi}$ be given as in Figure~\ref{fig:rainbow}. After the first recursion of Algorithm~\ref{alg:cycles}, $[v_1] = \{ v_1, v_2, v_4\}$ and $[v_i] = \{v_i\}$ for $i\in\{3,5,6\}$. After the second recursion, $[v_1] = \{v_1, v_2, v_3, v_4, v_5\}$ and $[v_6] = \{v_6\}$. No rainbow cycles are found after the second recursion, therefore, $\tilde{\Session}_1(1) = \{v_1, v_2, v_4, v_6\}\cap \{v_1, v_2, v_3, v_4, v_5\} = \{ v_1, v_2, v_4\}$.
\end{example}

While all cells except $v_6$ get merged into a single cell in Figure~\ref{fig:rainbow}, constructing drinking sessions as in \eqref{eq:session} allows multiple robots to simultaneously occupy $v_1-v_5$. For instance, the drinking sessions of $r_1$ and $r_3$ are disconnected since they have no common cells. Therefore, $r_1$ and $r_3$ can enter cells $v_1$ and $v_3$, respectively, at the same time. Similarly, such drinking sessions also allow $r_1$, $r_2$ and $r_3$ to simultaneously occupy cells $v_4$, $v_2$ and $v_5$, respectively. Even in this small example, we can see the benefit of using Eq.~\eqref{eq:session} instead of Eq.~\eqref{eq:naive}. The modification allows $r_1$ and $r_5$ to be at $v_4$ and $v_6$, respectively, while it was not possible with the naive formulation.

\begin{remark}\label{rem:conservative}
	Sessions constructed by \eqref{eq:session} are always contained in the sessions constructed by \eqref{eq:naive}. That is, when drinking sessions are found as in \eqref{eq:session}, robots would need fewer bottles to move, and the resulting control policies would be more permissive. 
\end{remark}

We now propose a control policy that prevents collisions and deadlocks when drinking sessions are constructed as in \eqref{eq:session}.

\subsection{Control Strategy}\label{ssec:control}

We propose Algorithm~\ref{alg:main} as a control policy to solve Problem~\ref{prob1}. In order to implement Algorithm~\ref{alg:main} in a distributed manner, we require the communication graph to be identical to the resource dependency graph. That is, if two robots visit a common cell, there must be a communication channel between them. We also assume that messages from one robot to another are received in the order that they are sent. We now briefly explain the flow of the control policy, which is illustrated in Figure~\ref{fig:flow}, and then provide more details. 

Robots are initialized as follows. If robot $r_n$ starts at a shared cell, all bottles in its initial drinking session are given to $r_n$ and the related request tokens are given to the corresponding robots. To ensure bottles can be assigned in this way, we require that the initial drinking sessions are disjoint. Robots with shared initial cells are then initialized in drinking state, which is possible since they hold all the required bottles, and the remaining robots are initialized in tranquil state.

If the final cell is reached, $STOP$ action is chosen as the robot accomplished its task. Otherwise, if a robot is in either tranquil or drinking state, the control policy chooses the action $GO$ until the robot reaches to the next cell. When a robot moves from a free cell to a shared cell, it first becomes thirsty and the control policy issues the action $STOP$ until the robot starts drinking. When moving between shared cells, a robot becomes insatiable if it needs to acquire additional bottles, and $STOP$ action is chosen until the robot starts drinking again. If a robot $r_n$ is leaving a shared cell where another robot $r_m$'s path terminates, for the last time, $r_n$ sends $r_m$ a {\emph{cleared}} message. When a robot's path terminates at a shared cell, it must be careful not to arrive early and block others from progressing. Therefore, when a robot is about to move to a segment of consecutive shared cells which includes its final cell, it needs to wait for others to clear its final cell. 

All robots are initialized as previously described. Let $r_n$ be an arbitrary robot. Lines $1-2$ of Algorithm~\ref{alg:main} ensure that $r_n$ does not move after reaching its final cell. Otherwise, let $\pi_n^t$ denote the next cell on $r_n$'s path. If $\pi_n^t$ is a free cell, the control policy chooses the $GO$ action until the robot reaches $\pi_n^{t+1}$ (lines $5-8$). $r_n$ goes back to tranquil state if it was drinking and sends a \emph{cleared} message to a corresponding robot $r_m$ if (i) $\pi_n^{t-1}$ was the terminal cell for $r_m$ and (ii) $\pi_n^{t-1}$ will not be visited by $r_n$ again in the future (lines $9-13$). When $\pi_n^t$ is a shared cell, there are two possible options: (i) If there is no free cell between the next cell and the final cell of $r_n$, i.e., $\pi_n^{end} \in \Session_n(t)$ where $\Session_n(t)$ is defined as in \eqref{eq:naive}, the robot must wait for all other robots to clear this cell (lines $15-18$). This wait is needed, otherwise, $r_n$ might block others by arriving and staying indefinitely at its final cell. When all others clear its final state, $r_n$ can start moving again. (ii) If the final cell is not included in the drinking session, $r_n$ checks its drinking state. If tranquil, $r_n$ becomes thirsty with the drinking session $\tilde{\Session}_n(t)$ (lines $19-20$). If drinking, $r_n$ becomes insatiable with $\Big(\Bottles_n\big(\tilde{\Session}_n(t)\big), \Bottles_n\big(\tilde{\Session}_n(t+1)\big)\Big)$ (lines $21-22$). Then, the robot waits until it starts drinking to move to the next cell (lines $24-26$). When the robot starts drinking, it is allowed to move until it reaches $\pi_n^t$ (lines $27-29$). Upon reaching $\pi_n^t$, the robot sends cleared signal if needed (line $31$), as previously explained.

We now show the correctness of Algorithm~\ref{alg:main}. 

\begin{theorem}\label{theorem1}
	Given an instance of Problem \ref{prob1}, using Algorithm \ref{alg:main} as a control policy solves Problem \ref{prob1} if 
	\begin{enumerate}
		\item Initial drinking sessions are disjoint for each robot, i.e., $\tilde{\Session}_m(0) \cap \tilde{\Session}_n(0) = \emptyset$ for all $m\neq n$ and
		\item Final cells of each robot belong to a different equivalence class, i.e., $(\pi_m^{end},\pi_n^{end}) \not\in \sim$ for any $m\neq n$ where $\sim$ is computed according to Algorithm~2, and
		\item There exists at least one free cell in each $\pi_n$.
	\end{enumerate}
\end{theorem}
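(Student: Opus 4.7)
The plan is to split the argument into two independent claims and dispatch them separately: collision-freeness and liveness. Collisions on free cells are immediate, since a free cell by definition lies on only one robot's path, so the substantive case is shared cells. I would maintain, by induction on time, the invariant that whenever $r_n$ is physically located at a shared cell $\pi_n^t$, it is in either the drinking or insatiable state and currently holds every bottle in $\Bottles_n(\tilde{\Session}_n(t))$, which in particular contains $\bottle{k}{n}{m}$ for every other robot $r_m$ that also visits $\pi_n^t = v_k$. Because a bottle can be held by at most one philosopher at a time—an invariant inherited from Algorithm~\ref{alg:base} and preserved by the new rules R7 and R'5, since both release a bottle only by $Send(b)$ with $hold_p(b)\gets false$—two robots cannot occupy the same shared cell. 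The inductive step is routine because Algorithm~\ref{alg:main} only issues $GO$ when the robot is drinking (or between free cells), and never steps into a shared cell whose bottles are not already in hand.

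For liveness, I would argue deadlock-freeness and eventual progress separately. Deadlock-freeness rests on the rainbow-cycle characterization developed in Section~\ref{chp:map}: any deadlock configuration per Definition~\ref{dfn:deadlock} induces a rainbow cycle in $G_\Pi$ formed by the cells that the waiting robots currently occupy. After Algorithm~\ref{alg:cycles} terminates, the quotient graph $\tilde{G}_\Pi$ has no rainbow cycles, and the vertices of every original rainbow cycle lie in a single equivalence class $[u]$. The session $\tilde{\Session}_n(t) = \Session_n(t)\cap [\pi_n^t]$ is chosen precisely so that any time $r_n$ enters an equivalence class it commits, via R7 and lines 24–25, to holding every bottle it will need for the portion of its path inside that class before it starts moving. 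Any cyclic wait would therefore have to close strictly within one equivalence class, but then a participant would already hold the bottle it is supposedly blocked on—contradiction. The insatiable state requires a dedicated check: an insatiable $r_n$ never releases a needed bottle, so insatiable–insatiable cyclic waiting is a priori possible, but R'5 grants an insatiable requester priority over any thirsty holder, eliminating mixed-state cycles, and the containment-in-one-class argument above rules out pure insatiable cycles.

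Eventual progress then reduces, within each equivalence class, to the correctness of Algorithm~\ref{alg:base} proved in \cite{ginat1989efficient}: thirsty philosophers eventually drink. I would verify that R'5 preserves the non-decreasing-session-number argument underlying that proof, and that every insatiable interval is finite. The latter uses condition~3: a robot in the insatiable-or-drinking phase is always heading toward a free cell on its path, at which point it becomes tranquil and releases its bottles, so no robot can retain an unbounded set of bottles. Conditions~1 and~2 are needed at the boundary: condition~1 ensures each robot can acquire its first session without conflicting with another robot's initial session, and condition~2 prevents a robot that has stopped at its final cell (and therefore remains there forever, by lines~1–2 of Algorithm~\ref{alg:main}) from permanently blocking another robot whose terminal drinking session would overlap with it.

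The main obstacle I expect is the interaction between the insatiable state and the equivalence-class construction, namely the structural lemma that, while $r_n$ is insatiable, every bottle it is still requesting lies in the same equivalence class $[\pi_n^{curr(r_n)}]$ as the bottles it already holds. Without this, a chain of insatiable robots could in principle close a cycle that crosses class boundaries, bypassing the rainbow-cycle argument. Establishing it amounts to tracking how $\tilde{\Session}_n(t)\cup\tilde{\Session}_n(t+1)$ propagates along a single shared segment of $\pi_n$ and showing it remains inside one class until $r_n$ reaches the next free cell, at which point it returns to tranquil. Once this lemma is in place, the rainbow-cycle argument closes both the deadlock and the liveness parts and the theorem follows.
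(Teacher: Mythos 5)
Your overall architecture matches the paper's: collision-freeness via the mutual exclusion of bottles plus the invariant that a robot on a shared cell holds all bottles of its current session, and deadlock-freeness via the rainbow-cycle/quotient-graph machinery, with conditions 1--3 handling the boundary cases. Within that shared skeleton, however, there are two genuine problems. First, the ``structural lemma'' you declare the proof to be contingent on --- that while $r_n$ is insatiable, every bottle it is still requesting lies in the same equivalence class $[\pi_n^{curr(r_n)}]$ as the bottles it already holds --- is false, and if it were true the insatiable state would be pointless. When $r_n$ at $\pi_n^t$ becomes insatiable with $\Bottles_n(\tilde{\Session}_n(t)\cup\tilde{\Session}_n(t+1))$, the bottles it newly requests lie in $[\pi_n^{t+1}]$, which is in general a \emph{different} class from $[\pi_n^{t}]$ (if it were the same class, $\tilde{\Session}_n(t+1)\subseteq\tilde{\Session}_n(t)$ and the robot would not be blocked at all --- your own within-class observation). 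The correct resolution, which the paper uses and which you in fact sketch in your second paragraph before abandoning it, is that every cross-class bottle dependency in a cyclic wait corresponds to a colored edge of $\tilde{G}_\Pi$ (because the bottle $r_n$ requests is associated with a cell in $[\pi_n^{t_n+1}]$ that the holder $r_{n+1}$ needs for $[\pi_{n+1}^{t_{n+1}}]$ or $[\pi_{n+1}^{t_{n+1}+1}]$), so an all-insatiable cyclic wait yields a rainbow cycle in $\tilde{G}_\Pi$, contradicting the fixed point of Algorithm~\ref{alg:cycles}. No containment lemma is needed; by making the theorem ``follow'' only once the false lemma is in place, your write-up does not close.

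Second, you omit one of the three $STOP$ branches entirely: lines $10$--$14$ of Algorithm~\ref{alg:main}, where a robot whose final cell lies in its current all-shared segment waits (in tranquil state, at its last free cell) until all others have cleared that final cell. The paper must show this wait cannot itself deadlock, and this is where condition~2 actually enters --- it is not, as you suggest, that condition~2 directly stops a robot parked at its final cell from blocking someone mid-path (the ``cleared'' wait is what prevents that); rather, condition~2 rules out robots mutually waiting for each other's final segments to clear, and the tranquil-while-waiting observation guarantees such a robot holds no bottles and hence blocks no one. A smaller inaccuracy: in the bottle-level deadlock of case (iii) the robots need not be physically adjacent, so the cycle lives among equivalence classes of sessions in $\tilde{G}_\Pi$, not among ``the cells the waiting robots currently occupy'' in $G_\Pi$.
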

The proof of Theorem~\ref{theorem1} can be found in the Appendix.

The first condition in Theorem~\ref{theorem1} ensures that robots can be initialized correctly. Imagine a robot $r_n$ whose path starts with a shared cell. If $r_n$ is initialized in tranquil state, it will momentarily violate the requirement that \emph{``$r_n$ must be drinking from all the bottles in $\Bottles_n(\tilde{\Session}_n(0))$ to be able to occupy $\pi_n^0$"}. If initial drinking sessions are disjoint for each robot, $r_n$ can immediately start drinking. Therefore, all such robots can be initialized in drinking state if the first condition is satisfied. The second condition is required so that robots whose paths end in a shared cell do not block others from progressing by reaching their final cells early. The last condition is required otherwise (1) and (2) cannot be satisfied at the same time.

\begin{remark}
If (1) of Theorem~\ref{theorem1} is replaced by $${\Session}_m(0) \cap {\Session}_n(0)  = \emptyset,$$ the Naive formulation explained in Section~\ref{ssec:naive} also solves Problem \ref{prob1}. However, as mentioned in Remark~\ref{rem:conservative}, drinking sessions constructed by \eqref{eq:naive} always contain sessions constructed by \eqref{eq:session}, that is, $\Session_{n}(t)\supseteq \tilde{\Session}_{n}(t)$. Therefore, condition (1) of Theorem~\ref{theorem1} becomes more restrictive for the Naive implementation.
\end{remark}

\begin{figure}[t]
	\centering
	\includegraphics[width=1\columnwidth]{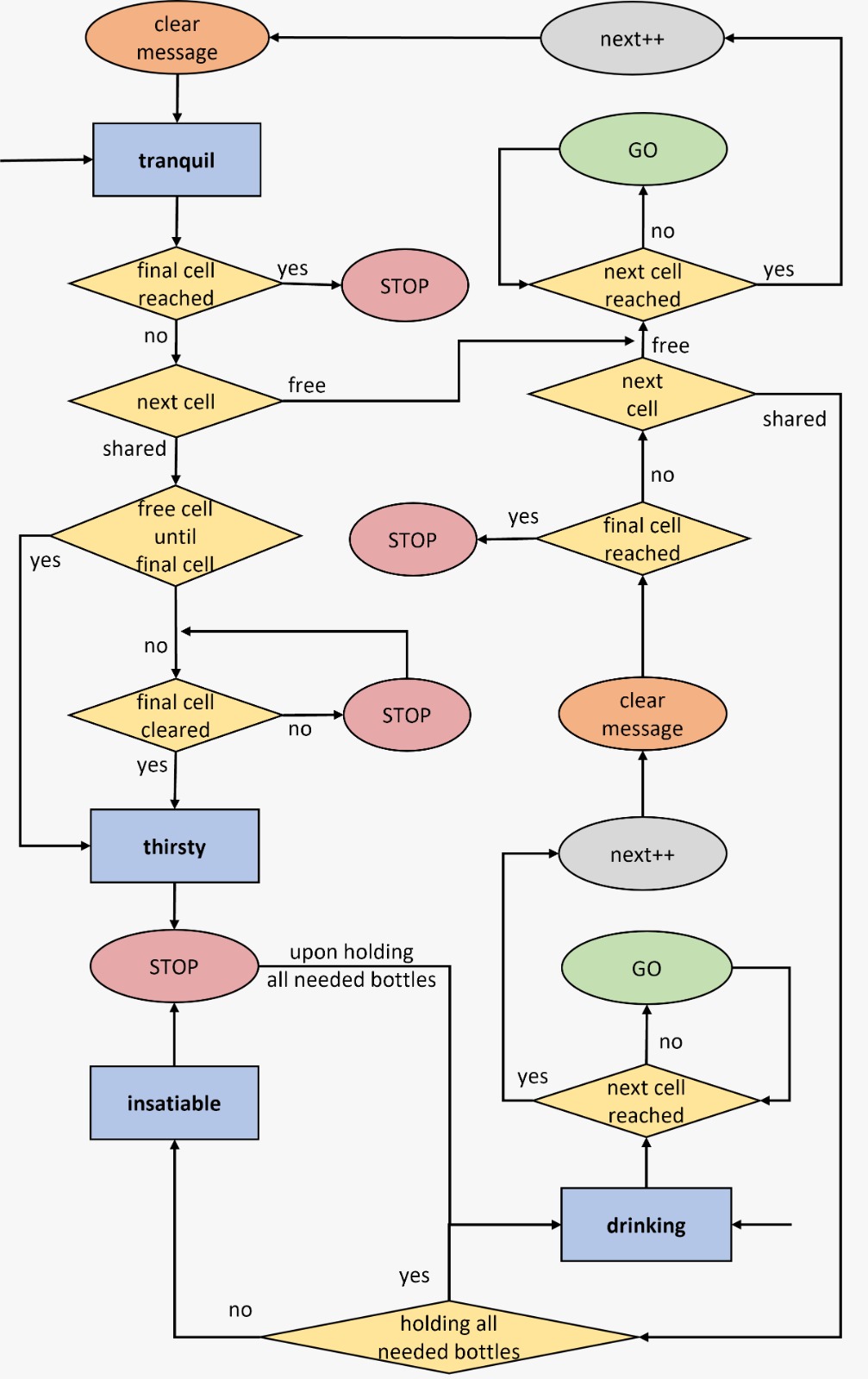}
	\caption{Flowchart of the control policy explained in Algorithm \ref{alg:main}.}
	\label{fig:flow}
\end{figure}

 \begin{algorithm}
	\caption{Control policy for $r_n$}\label{alg:main}
	\begin{algorithmic}[1]
		\If{$r_n$.is\_final\_cell\_reached}
		\State $r_n.STOP$
		\Else
		\State $t$ $\gets$ $next(r_n)$
		\If{is\_free($\pi_n^t$)}
		\While{$\neg r_n$.is\_reached($\pi_n^t$)}
		\State $r_n.GO$
		\EndWhile
		\State $next(r_n)$ $\gets$ $next(r_n)+1$
		\If{$r_n$.is\_drinking}
		\State $r_n$.get\_tranquil() 
		\State send\_cleared\_message\_if\_needed()
		\EndIf
		\Else
		\If{$\pi_n^{end} \in \Session_n(t)$}
		\While{ $\neg$cleared($\pi_n^{end}$)}
		\State $r_n.STOP$
		\EndWhile
		\ElsIf{$r_n$.is\_tranquil}
		\State $r_n$.get\_thirsty($\tilde{\Session}_n(t)$)
		\ElsIf{$r_n$.is\_drinking}
		\State $r_n$.get\_insatiable($\Bottles_n(	\tilde{\Session}_n(t), \tilde{\Session}_n(t+1))$)
		\EndIf
		\While{$\neg r_n$.is\_drinking}
		\State $r_n.STOP$
		\EndWhile
		\While{$\neg r_n$.is\_reached($\pi_n^t$)}
		\State $r_n.GO$
		\EndWhile
		\State $next(r_n)$ $\gets$ $next(r_n)+1$
		\State send\_cleared\_message\_if\_needed()
		\EndIf
		\EndIf
	\end{algorithmic}
\end{algorithm}

\begin{remark}
The control policy given in Algorithm~\ref{alg:main} satisfies liveness, fairness and concurrency properties. The liveness proof is shown in Theorem~\ref{theorem1}, and fairness and concurrency properties follows directly from \cite{ginat1989efficient}.
\end{remark}


\section{Results}\label{chp:results}
In this section, we compare our \emph{Rainbow Cycle} based method explained in Sections~\ref{ssec:insat}-\ref{ssec:control}, denoted \emph{DrPP-RC}, with other path execution methods using identical paths. Firstly, we compare DrPP-RC with the \emph{Naive} method, denoted \emph{DrPP-N}, which is explained in Section~\ref{ssec:naive}. This comparison demonstrates the performance improvement that results from the addition of the new drinking state. As stated in Remark~\ref{rem:conservative}, DrPP-RC uses smaller drinking sessions, and allows more concurrency. We also provide results {for DrPP-N \emph{without} $R7$}. 
This rule is an addition to the original DrPP solution of \cite{chandy-misra} and exploits the structure of the multi-robot path execution problem by allowing robots to drop bottles while in drinking state. 

We further compare DrPP-RC with the \emph{Minimal Communication Policy} of \cite{ma2017multi}, denoted \emph{MCP}, which prevents collisions and deadlocks by maintaining a fixed visiting order for each cell. A robot is allowed to enter a cell only if all the other robots, which are planned to visit the said cell earlier, have already visited and left the said state. It is shown that, under mild conditions on the collection of the paths, keeping this fixed order prevents collisions and deadlocks. We refer the reader to \cite{ma2017multi} for more details. 

We also note that, conditions required by \cite{zhou2020distributed} are too restrictive for the majority of the examples provided in this section. That is, some nodes in \graphname~are connected by more than one colored edge, hence \cite{zhou2020distributed} cannot be used. On the other hand, merging shared cells as in Equation~\eqref{eq:naive} generates a quotient graph that satisfies the required conditions. Then, the performance of \cite{zhou2020distributed} is identical to that of DrPP-N without $R7$. However, as Section~\ref{ssec:random} shows,  \cite{zhou2020distributed} cannot still solve all problems solved by DrPP-RC, and for the problems it can solve, it is significantly outperformed by both DrPP-RC and DrPP-N.

To capture the uncertainty in the robot motions, each robot is assigned a \emph{delay probability}. When the action $GO$ is chosen, a robot either stays in its current cell with this probability, or completes its transition to the next cell before the next time step leading to asynchrony between robots' motion. Our implementation can be accessed from \url{https://github.com/sahiny/philosophers}.

\subsection{Randomly Generated Examples}\label{ssec:random}
\begin{figure}
	\centering 
	\includegraphics[width=1\linewidth, 
	clip]
	{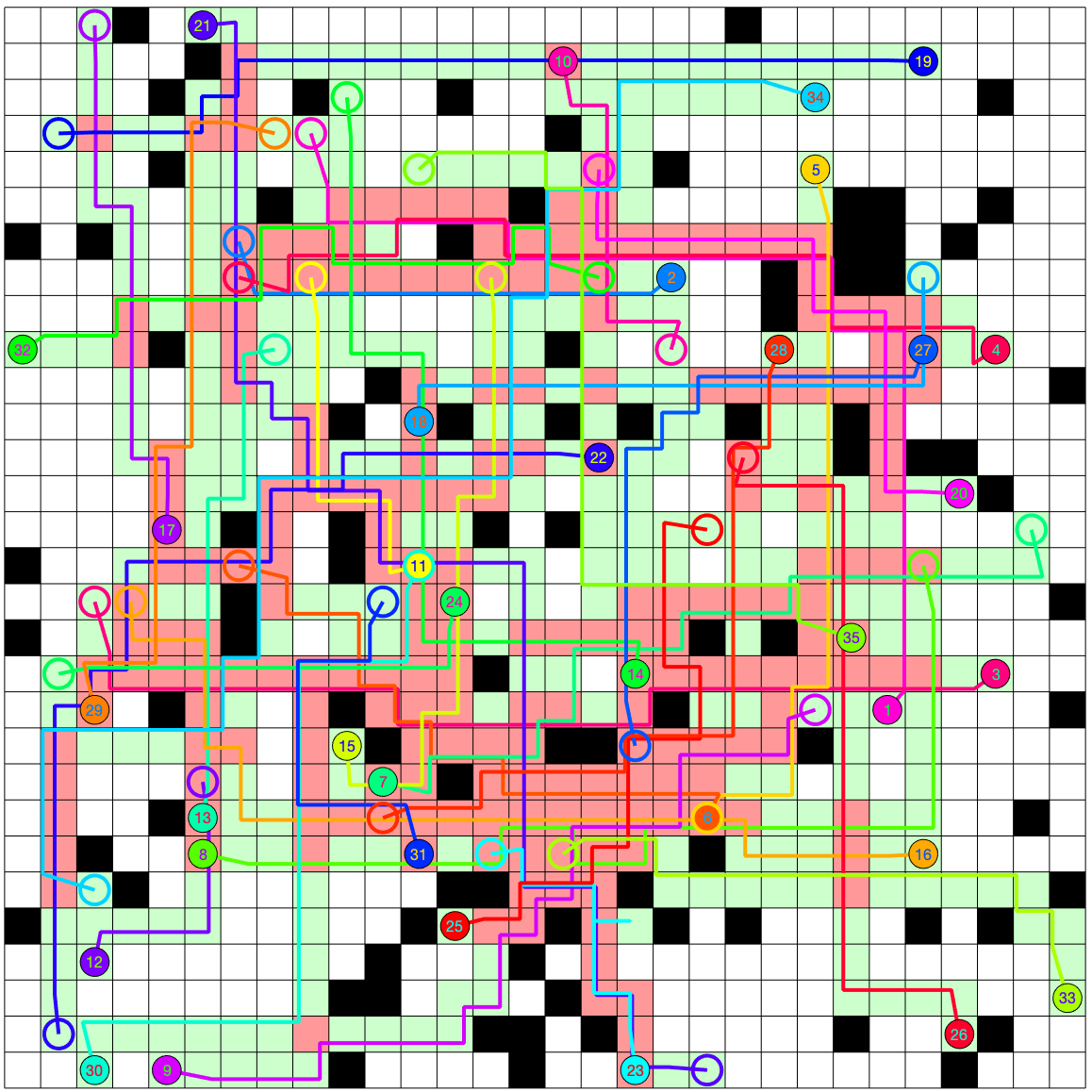}
	\caption{Randomly generated example, denoted random1, consisting of $35$ robots on a $30\times30$ grid with $10\%$ blocked cells, which are shown in black. The path of each robot is shown with a unique color where the solid and hollow circles represent the initial and final cells, respectively. Free (i.e., used by a single robot) and shared (i.e., visited by more than one robot) cells are painted green and red, respectively. Among the cells that are visited by at least one robot, 44\% (217/496) are shared cells, and a robot's path consists of 65\% shared cells on average. These statistics are similar for other random examples.}\label{fig:random1} 
\end{figure}

There are $10$ MRPE instances in \cite{ma2017multi}, labelled random~1-10, where $35$ robots navigate in 4-connected grids of size $30\times30$. In each example, randomly generated obstacles block $10\%$ of the cells, and robots are assigned random but unique initial and final locations. The first of these randomly generated examples can be seen Figure~\ref{fig:random1}. All control policies use the same paths generated by the Approximate Minimization in Expectation algorithm of \cite{ma2017multi}. 
Delay probabilities of robots are sampled from the range $(0, 1-1/t_{max})$. Note that, higher delay probabilities can be sampled as $t_{max}$ increase, resulting in \emph{slow moving} robots. Figure~\ref{fig:random_results} reports the makespan and flowtime statistics averaged over 1000 runs for varying $t_{max}$ values. The delay probabilities are sampled randomly for each run, but kept identical over different control policies. As expected, both makespan and flowtime statistics increase with $t_{max}$, as higher delay probabilities result in slower robots. An illustrative run of the DrPP-RC algorithm for $t_{max} = 2$ and environment random1 can be seen from \url{https://youtu.be/tht4ydW5iJA}.

From Figure~\ref{fig:random_results}, we first observe that the addition of $R7$ improves the flowtime performance of DrPP-N significantly, while its effect on makespan is neglible. Secondly, we observe that DrPP-RC always performs better than DrPP-N. This is expected as drinking sessions for DrPP-N, which are computed by \eqref{eq:naive}, are always larger than the ones of DrPP-RC, which are computed by \eqref{eq:session}. That is, robots using DrPP-N need more bottles to move, and thus, wait more. Moreover, DrPP-N requires stronger assumptions to hold for a collection of paths. For instance, only one of the ten random examples (random7) satisfy the the assumptions in Theorem~\ref{theorem1} for DrPP-N. The number of instances that satisfy the assumptions increase to four for DrPP-RC (random 3, 4, 7, 10). The random1 example illustrated in Figure~\ref{fig:random1} originally violates the assumptions, but this is fixed for both drinking based methods by adding a single cell into a robot's path. We here note that, the set of valid paths for MCP and DrPP algorithms are non-comparable. There are paths that satisfy the assumptions of one algorithm and violate the other, and vice versa.

We also observe that makespan values are quite similar for DrPP-RC and MCP methods, although MCP often performs slightly better in this regard. Given a collection of paths, the makespan is largely determined by the \emph{``slowest''} robot, a robot with a long path and/or a high delay probability, regardless of the control policies. Therefore, makespan statistics do not necessarily reflect the amount of concurrency allowed by the control policies. Ideally, in the case of a slow moving robot, we want the control policies not to stop or slow down other robots unnecessarily, but to allow them move freely. The flowtime statistics reflect these properties better. From Figure~\ref{fig:random_results}, we see that flowtime values increase more significantly with $t_{max}$ for MCP, compared to DrPP-RC. This trend can be explained with how priority orders are maintained in each of the algorithms. As the delay probabilities increase, there is more uncertainty in the motion of robots. MCP keeps a fixed priority order between robots, which might lead to robots waiting for each other unnecessarily. On the other hand, DrPP-RC dynamically adjusts this order, which leads to more concurrent behavior, hence the smaller flowtime values. Section~\ref{ssec:nasty} illustrates this phenomenon with a simple example.

\begin{figure*}[h]
	\centering
	\includegraphics[width=1\textwidth, clip]{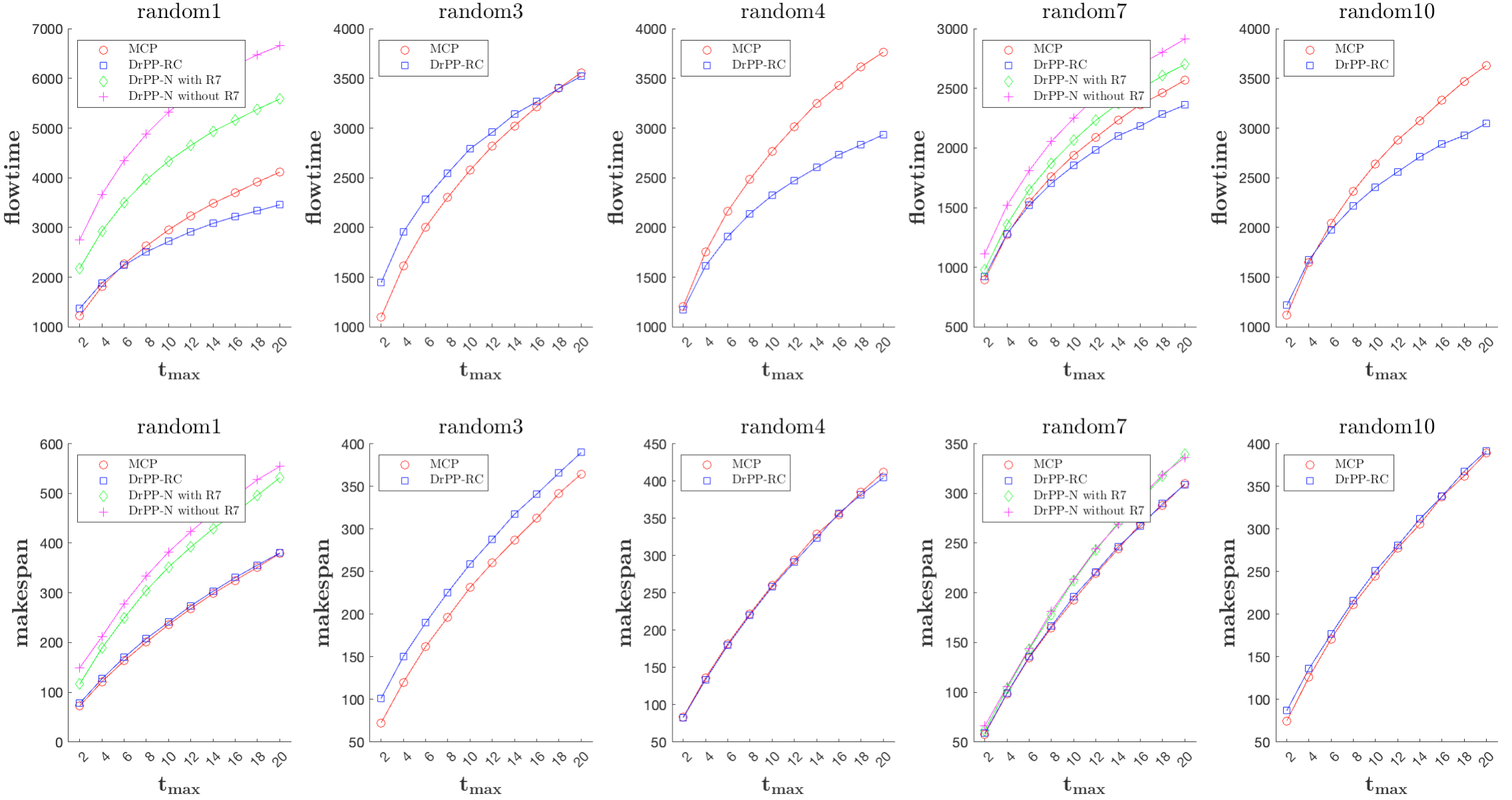}
	\caption{Makespan and flowtime statistics averaged over 1000 runs for the randomly generated environments under varying $t_{max}$ values. DrPP based method cannot be used in environments where the collection of paths violate the conditions in Theorem~\ref{theorem1}. DrPP-N and DrPP-RC methods can solve 2 and 5 out of 10 randomly generated instances, respectively.}
	\label{fig:random_results}
\end{figure*}

\begin{figure} 
	\centering 
	\includegraphics[width=0.7\columnwidth, trim = {5cm 2cm 5cm 4cm}, clip]{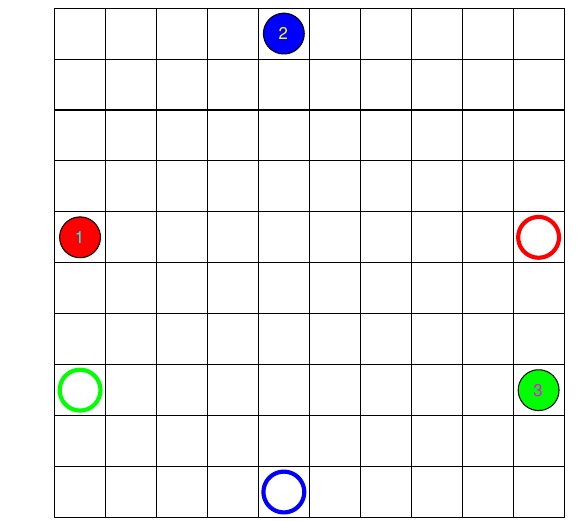}
	\caption{A simple example to show effects of a slow moving robot. Robots $r_1$, $r_2$ and $r_3$ are colored in red, blue and green, respectively. Initial and final cells of the robots are marked with solid and hollow circles of their unique color, respectively.}\label{fig:nasty} 
\end{figure}

\subsection{Makespan versus Flowtime}\label{ssec:nasty}

As mentioned earlier, \cite{ma2017multi} assumes that delay probabilities are known a priori, and computes paths to minimize the expected makespan. Once the paths are computed, the priority order between robots is fixed to ensure MCP policies are collision and deadlock-free. We now provide a simple example to illustrate the effect of using inaccurate delay probabilities in the path planning process. Imagine $3$ robots are sharing a $10$ by $10$ grid environment as shown in Figure~\ref{fig:nasty}. Assume that the delay probabilites for robots $r_1$, $r_2$ and $r_3$ are known to be $\{0, 0.4, 0.8\}$, respectively. If we compute paths to minimize the expected makespan, resulting paths are straight lines for each robot. Paths $\pi_1$ and $\pi_2$ intersect at a single cell, for which $r_1$ has a priority over $r_2$. Similarly $\pi_2$ and $\pi_3$ also intersect at a single cell, for which $r_2$ has a priority over $r_3$. We run this example using inaccurate delay probabilities $\{0.8, 0.4, 0\}$ to see how the makespan and flowtime statistics are affected. 

Over $1000$ runs, makespan values are found to be $48.30$ and $45.77$ steps for MCP and DrPP-RC implementations, respectively. The makespan values are close because of the slow moving $r_1$, which becomes the bottleneck of the system. Therefore, it is not possible to improve the makespan statistics by employing different control policies. However, the flowtime statistics are found as $128.78$ and $77.78$ steps for MCP and DrPP-RC implementations, respectively. Significant difference is the result of how a slow moving robot is treated by each policy. For the MCP implementation, $r_2$ (resp. $r_3$) needs to wait for $r_1$ (resp. $r_2$) unnecessarily, since the priority order is fixed at the path planning phase. On the other hand, DrPP-RC implementation allows robots to modify the priority order at run-time, resulting in improved flowtime statistics.

\begin{figure}
	\centering 
	\includegraphics[width=1\columnwidth, trim = {6.3cm 9.1cm 6.3cm 8.3cm}, clip]{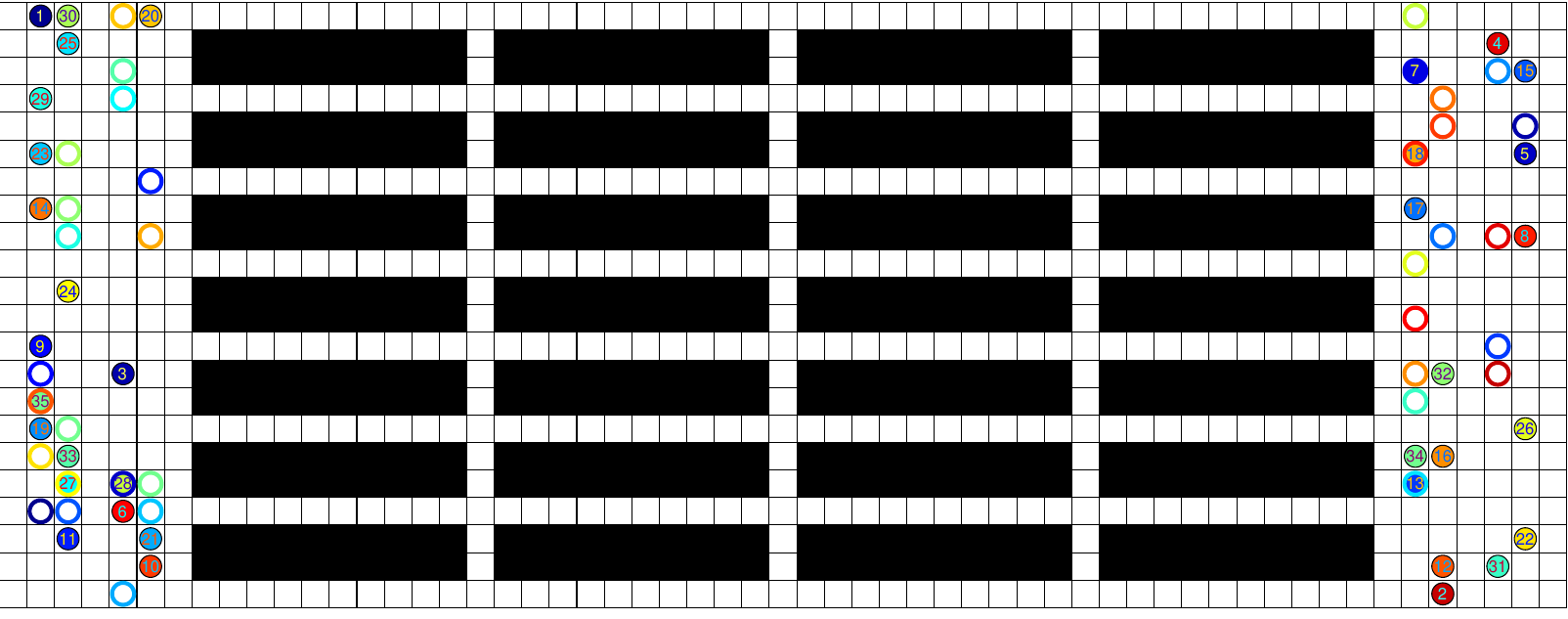}
	\caption{Illustration of a warehouse example on a $22\times 57$ grid. Blocked cells are shown in black. Initial and final cells are marked with a solid and a hollow circle of a unique color, respectively.}\label{fig:warehouse} 
\end{figure}

\subsection{Warehouse Example}\label{ssec:warehouse}

We also compare the performance of the control policies in a more structured warehouse-like environment. This warehouse example is taken from \cite{ma2017multi}, and it has $35$ robots as shown in Figure~\ref{fig:warehouse}. The makespan and flowtime statistics are reported in Figure~\ref{fig:warehouse_results}, which are averaged over 1000 runs for varying $t_{max}$ values. Due to stronger assumptions on the collection of paths, DrPP-N is not able to handle this example. Similarly, the conditions required by \cite{zhou2020distributed} are too restrictive, hence it cannot solve this problem. Although paths can be altered to allow \cite{zhou2020distributed} to be used, this requires each aisle to be abstracted as one discrete cell, and limits the number of robots in each aisle to at most one. As a result, the performance of \cite{zhou2020distributed} would be significantly worse compared to both DrPP-RC and MCP, no matter how paths are generated.

Similar to Section~\ref{ssec:random}, we observe that makespan values are better for MCP, but DrPP-RC scales better with $t_{max}$ for flowtime statistics. Upon closer inspection, we see that robots moving in narrow corridors in opposite directions lead to many rainbow cycles. By enforcing a one-way policy in each corridor, similar to \cite{cohen2015feasibility}, many of these rainbow cycles can be eliminated and the performance of our method can be improved. Indeed, Figure~\ref{fig:warehouse_results} reports the results when paths are modified such that no horizontal corridor has robots moving in opposing directions.

We further use the same warehouse example to demonstrate how DrPP-RC can be used in conjunction with a higher-level emergency stopping algorithm. In practical examples, robots carry shelves around the warehouse, which might make it dangerous for humans to work in the same space. To guarantee safety for humans, we require robots to stop and give way if there is a human in a predefined radius. As the video in \url{https://youtu.be/gVSKs1iKsQw} shows, DrPP-RC guarantees that deadlocks and collisions are avoided in such cases.

\begin{figure}
	\centering
	\includegraphics[width=1\columnwidth, clip]{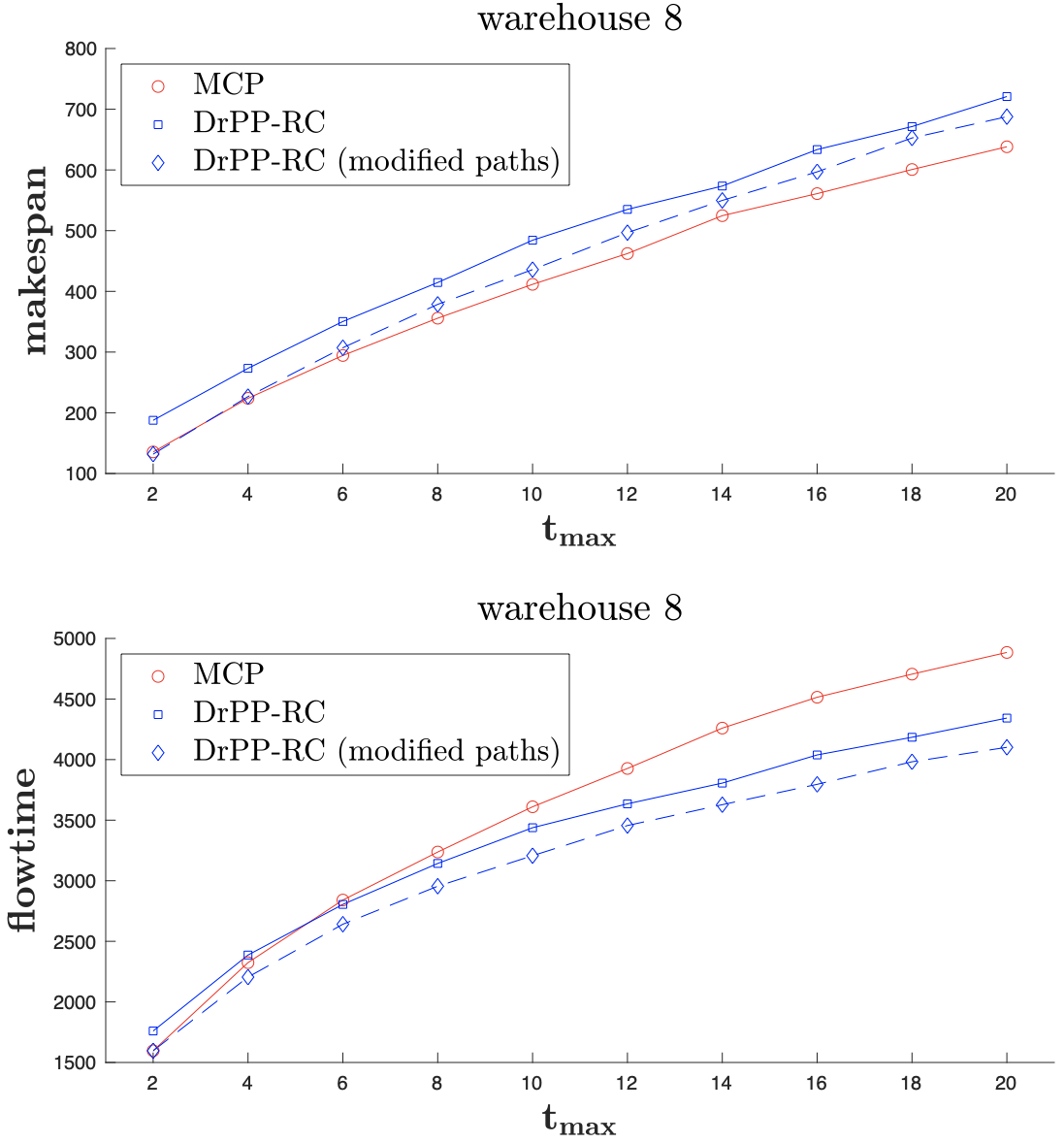}
	\caption{Makespan and flowtime statistics averaged over 1000 runs for the warehouse environment under varying $t_{max}$ values. Dashed lines show the improvement obtained by modifying the paths to decrease the number of rainbow cycles. DrPP-N cannot solve this instance as the collection of paths violate the conditions in Theorem \ref{theorem1}.}
	\label{fig:warehouse_results}
\end{figure}


\section{Conclusions}\label{chp:conclusion}
In this paper, we presented a method to solve the multi-robot path execution (MRPE) problem. Our method is based on a reformulation of the MRPE problem as an instance of drinking philosophers problem (DrPP). We showed that the existing solutions to the DrPP can be used to solve instances of MRPE problems if drinking sessions are constructed carefully. However, such an approach leads to conservative control policies. To improve the system performance, we provided a less conservative approach where we modified an existing DrPP solution. We provided conditions under which our control policies are shown to be collision and deadlock-free. We further demonstrated the efficacy of this method by comparing it with existing work. We observed that our method provides similar makespan performance to \cite{ma2017multi} while outperforming it in flowtime statistics, especially as uncertainty in robots' motion increase. This improvement can be explained mainly by our method's ability to change the priority order between robots during run-time, as opposed to keeping a fixed order. 

Our current method and derived conditions that guarantee collision and deadlock-freeness are limited to the multi-robot path execution problem where robot paths are assumed to be fixed a priori. Using such conditions to guarantee deadlock-freeness of replanning approaches or designing life-long planning algorithms with similar guarantees are interesting directions for future research. We are also interested in finding looser conditions that guarantee collision and deadlock-freeness, as the current conditions are sufficient but might not be necessary.

\begin{ack}
We thank Hang Ma from Simon Fraser University and Sven Koenig from University of Southern California for sharing their code for MCP implementation in \cite{ma2017multi} with us. We also thank Ruya Karagulle for pointing out typos in Theorem~1.  The last but not least, we thank the reviewers for their valuable comments and suggestions, which improved the clarity and the presentation of the paper greatly. This work is supported in part by ONR grant N00014-18-1-2501, NSF grant ECCS-1553873, and an Early Career Faculty grant from NASA's Space Technology Research Grants Program.
\end{ack}

\bibliographystyle{agsm}
\bibliography{references}

\appendix
\section*{Appendix - Proof of Theorem~\ref{theorem1} }

	We first start by showing that the Algorithm~\ref{alg:main} is collision-free. Assume that $r_n$ is currently occupying the shared cell $\pi_n^t$. Note that, when a robot is about to move to a shared cell, $GO$ action is issued only when $r_n$ is drinking (lines $20-23$). Therefore, before reaching $\pi_n^t$, $r_n$ was in drinking state, and thus, was holding all the bottles in $\tilde{\Session}_n(t)$. If $\pi_n^{t+1}$ is a free cell, $r_n$ would stay in drinking state until reaching $\pi_n^{t+1}$ (lines $28-35$). Otherwise, it would get insatiable with 
	$\Big(\Bottles_n\big(\tilde{\Session}_n(t)\big), \Bottles_n\big(\tilde{\Session}_n(t+1)\big)\Big)$.
	In neither of these scenarios, $r_n$ releases any bottles in $\Bottles_n(\pi_n^t) \subseteq \Bottles_n\big(\tilde{\mathcal{S}}_n(t)\big)$ before reaching to $\pi_n^{t+1}$. Note also that, by construction of drinking sessions, $\pi_n^t \subseteq \tilde{\mathcal{S}}_n(t)$. Since bottles are mutually exclusive, none of the other robots could acquire the bottles in $\Bottles_n(\pi_n^t)$ while $r_n$ is in $\pi_n^t$. This implies that collisions are avoided, as no other robot is allowed to occupy $\pi_n^t$ before $r_n$ leaves.
	
	We now show that Algorithm \ref{alg:main} is deadlock free. As defined in Definition~\ref{dfn:deadlock} deadlock is any configuration where a subset of robots, which have not reached their final cell, choose $STOP$ action indefinitely. As it can be seen from Algorithm~\ref{alg:main}, there are only three cases where a robot chooses the $STOP$ action: (i) when the robot is already in the final cell (line $2$), (ii) when there are no free cells from the next cell up to and including the final cell, and the final cell is not yet cleared by all other robots (line $14$), (iii) when the robot is in thirsty or insatiable state (line $19$). In the following, we show that none of these cases can cause a deadlock.
	
	We start by showing that neither (i) nor (ii) could cause a deadlock. 
	To do so, assume $r_n$ has reached its final cell and is causing a deadlock by blocking others from progressing. By (3) of Theorem~\ref{theorem1}, we know that there exist at least one free cell in each path. Since we assumed that $r_n$ is blocking others by waiting in its final cell, $\pi_n^{end}$ must be a shared cell. Then, there must be at least one free cell before $\pi_n^{end}$. Let $\pi_n^t$ denote the last free cell on $\pi_n$. A robot reaching a free cell gets into tranquil state, if it is not already in tranquil state, due to line $34$ of Algorithm~\ref{alg:main}. Otherwise, if $\pi_n^t$ is the first cell of $\pi_n$, $r_n$ would be in tranquil state before trying to move forward, since all robots are initialized in tranquil state. According to lines $12-15$ of Algorithm ~\ref{alg:main}, $r_n$ would wait in $\pi_n^t$ in tranquil state, until its final cell is cleared by all other robots. Since a tranquil robot does not need any bottles, no other robot could be waiting for $r_n$. However, this is a contradiction, and it is not possible for a robot to reach its final cell and block others from progressing. Therefore, (i) cannot be a reason for a deadlock. Furthermore, we showed that a robot waiting due to (ii) would stay in tranquil state until all others clear its final cell. As stated, a tranquil robot does not need any bottles, and thus, no other robot could be waiting for $r_n$. Thus, (ii) cannot cause deadlocks, either. 
	
	We now show that (iii) cannot cause deadlocks. To do so, assume that a subset of robots are stuck due to (iii), i.e., they are all in thirsty or insatiable state, and they need additional bottle(s) to move. If there was a robot who does not wait for any other robot, it would start drinking and moving. Therefore, some non-empty subset of these robots must be waiting circularly for each other. Without loss of generality, let $r_n$ be waiting for $r_{n+1}$ for $n \in \{1,\dots, K\}$ where $r_{K+1} = r_1$. That is, $r_{n}$ has some subset of bottles $r_{n-1}$ needs, and would not release them without acquiring some subset of bottles from $r_{n+1}$. Note that, there might be other robots choosing the $STOP$ action indefinitely as well, however, the main reason for the deadlock is this circular wait. Once the circular waiting ends, all robots would start moving according to their priority ordering. 
	
	For the time being, assume that each robot starts from a free inital cell and moves towards a free cell through an arbitrary number of shared cells in between. We later relax this assumption. Firstly, we know that none of the robots could be in tranquil or drinking state, otherwise they would be moving until reaching the next cell as lines $5-8$, $20-23$ and $28-31$ of Algorithm~\ref{alg:main}. Secondly, we show that, not all robots can be thirsty. Since a strict priority order is maintained between robots at all times, if all of them were thirsty, the robot with the highest priority would acquire all the bottles it needs according to $R'5$ and start drinking. A drinking robot starts moving, therefore cannot be participating in a deadlock. Therefore, there must be at least one robot that is in insatiable state. Thirdly, we show that if there is a deadlock, all robots participating in it must be in insatiable state. To show a contradiction, assume that at least one of the robots participating in the deadlock is thirsty. According to $R'5$, an insatiable robot always has a higher priority than a thirsty robot. Therefore, an insatiable robot cannot be waiting for a thirsty robot. Thus, all robots in a deadlock configuration must in insatiable state.

Let $\tilde{G}_\Pi$ be the graph returned by the Algorithm~\ref{alg:cycles} for the input \graphname~$G_\Pi$. We showed that all robots are in insatiable state. Let $\pi_n^{t_n}$ denote the current cell $r_n$ is occupying, i.e., $curr(r_n) = t_n$. That is, $r_n$ is insatiable with $\Big(\Bottles_n\big(\tilde{\Session}_n(t_n)\big), \Bottles_n\big(\tilde{\Session}_n(t_n+1)\big)\Big)$ and needs all the bottles in $\Bottles_n(\tilde{\Session}_n(t_n)\cup \tilde{\Session}_n(t_n+1))$ to start drinking. Since $r_n$ currently occupies $\pi_n^{t_n}$, it holds all the bottles in $\Bottles_n(\tilde{\Session}_n(t_n))$. This implies that $\tilde{\Session}_n(t_n)\not=\tilde{\Session}_n(t_n+1)$, and that $r_n$  does not hold some of the bottles in $\Bottles_n(\tilde{\Session}_n(t_n+1))$. Now define $[\tilde{\mathcal{S}}_n(t_n)] \doteq \bigcup_{v_i \in \tilde{\mathcal{S}}_n(t_n)} \{[v_i] \}$. By construction, there must be two nodes in $\tilde{G}_\Pi$, one corresponding to $[\tilde{\Session}_n(t_n)]$ and another corresponding to $[\tilde{\Session}_n(t_n+1)]$, and a $c_n$ colored edge from $[\tilde{\Session}_n(t_n)]$ to $[\tilde{\Session}_n(t_n+1)]$ in $\tilde{G}_\Pi$. Similarly, $r_{n+1}$ holds all the bottles in $\Bottles_{n+1}(\tilde{\Session}_{n+1}(t_{n+1}))$ and is missing some of the bottles in $\Bottles_{n+1}(\tilde{\Session}_{n+1}(t_{n+1}+1))$. Since $r_n$ is waiting for $r_{n+1}$, either $[\tilde{\Session}_n(t_n+1)] = [\tilde{\Session}_{n+1}(t_{n+1})]$ or $[\tilde{\Session}_n(t_{n}+1)] = [\tilde{\Session}_{n+1}(t_{n+1}+1)]$ must hold. This implies that, there exists a $c_{n}$ colored edge from $[\tilde{\Session}_{n}(t_{n})]$ to either $[\tilde{\Session}_{n+1}(t_{n+1})]$ or to $[\tilde{\Session}_{n+1}(t_{n+1}+1)]$. In a similar manner, either $[\tilde{\Session}_{i}(t_{i})] = [\tilde{\Session}_{i+1}(t_{i+1})]$ or $[\tilde{\Session}_{i}(t_{i})] = [\tilde{\Session}_{i+1}(t_{i+1}+1)]$ holds for each $i \in \{1,\dots, K\}$.

Assume $[\tilde{\Session}_n(t_n+1)] = [\tilde{\Session}_{n+1}(t_{n+1})]$ holds for each $n\in \{1, \dots, K\}$. Then, there exists a rainbow cycle $\{ ([\tilde{\Session}_1(t_1)],$ $ c_1, [\tilde{\Session}_2(t_2)]), \dots,$ $ ([\tilde{\Session}_K(t_K)], c_N,[\tilde{\Session}_1(t_1)])\}$ in $\tilde{G}_\Pi$. However, this is a contradiction since $\tilde{G}_\Pi$ must be free of rainbow cycles by construction and such a rainbow cycle would be found by the Algorithm \ref{alg:cycles}. 

Alternatively, let $m\in \{1, \dots, K\}$ be arbitrary and $[\tilde{\Session}_m(t_m+1)] = [\tilde{\Session}_{m+1}(t_{m+1}+1)]$ be true. For each $n\not= m$, assume $[\tilde{\Session}_n(t_n+1)] = [\tilde{\Session}_{n+1}(t_{n+1})]$ holds. Then, a rainbow cycle can be created similar to the previous case, where the edges $([\tilde{\Session}_m(t_m)], c_m,[\tilde{\Session}_{m+1}(t_{m+1})])$ and $([\tilde{\Session}_{m+1}(t_{m+1})], c_{m+1},$ $[\tilde{\Session}_{m+2}(t_{m+2})])$ are replaced by $([\tilde{\Session}_m(t_m)],$ $c_m,[\tilde{\Session}_{m+2}(t_{m+2})])$. Again, this is a contradiction because $\tilde{G}_\Pi$ must be free of rainbow cycles. 

Similarly, we can construct a rainbow cycle when $[\tilde{\Session}_m(t_m+1)] = [\tilde{\Session}_{m+1}(t_{m+1}+1)]$ holds for more than one robot. The length of the rainbow cycle would decrease by the number of robots for which the previous condition holds. On the extreme case, assume $[\tilde{\Session}_n(t_n+1)] = [\tilde{\Session}_{n+1}(t_{n+1}+1)]$ for all $n\in \{1, \dots, K\}$. In this case, we can no longer construct a rainbow cycle as before. However, this case implies all robots are insatiable with $\big(\Bottles_n({\Session}_{n}(t_n)), \Bottles_{n}({\Session}_{n}(t_n+1))\big)$ where $[{\Session}_{n}(t_n+1) ] = [{\Session}_{m}(t_m +1)]$ for each pair of $(m, n)$. In such a case, the robot $r_p$ with the highest priority would obtain all the bottles in $\Bottles_{p}({\Session}_{p_2})$ due to $\mathbf{R'5}(3a)$, and start drinking. After $r_p$ no longer needs bottles in $[{\Session}_{p}(t_p+1) ]$, robot with the second highest priority would start drinking. This pattern would be repeated until there are no robots waiting. This is a contradiction that there was a deadlock. Therefore, we showed that  a deadlock configuration cannot be reached and (iii) cannot be a reason for a deadlock. 
	
	We now relax the assumption that all robots are initialized at a free cell. To do so, we \emph{``modify"} all paths by appending \emph{virtual} free cell at the beginning. That is, all robots are initialized at a virtual free cell, which does not exist physically, and the next cell in a robot's path is its original initial cell. Theorem~\ref{theorem1} assumes that initial drinking sessions are disjoint for each robot, i.e., $\mathcal{S}_m(0) \cap \mathcal{S}_n(0) = \emptyset$ for all $m\not=n$. Since initial drinking sessions are disjoint, all robots whose initial cell is a shared cell can immediately start drinking. As a result, all of those robots can immediately \emph{``virtually move"} into their original initial cell. All other robots with free initial cells can also move to their original initial cells immediately. Therefore the assumption that all robots are initialized at a free cell is not restricting. 
	
	Finally, we relax the assumption that each robot moves towards a free cell. Theorem~\ref{theorem1} requires each path to have at least one free cell. Then, up until reaching the final free cell, moving towards a free cell assumption is not restrictive. We know under this condition that deadlocks are prevented, therefore all robots are at least guaranteed to reach to the final free cell in their path. Theorem~\ref{theorem1} also requires that the final drinking sessions are disjoint. Therefore, all robots would eventually be able to start drinking and reach their final location.

	Deadlocks occur when a subset of robots, which have not reached their final cell, choose $STOP$ action indefinitely. A robot chooses the $STOP$ action only under three conditions. We showed that none of these conditions can cause a deadlock. Thus, Algorithm \ref{alg:main} is deadlock-free. 
	\hfill$\square$

\end{document}